\documentclass{article}


\usepackage[preprint]{neurips_2024}




\usepackage[utf8]{inputenc} 
\usepackage[T1]{fontenc}    
\usepackage{url}            
\usepackage{booktabs}       
\usepackage{amsfonts}       
\usepackage{nicefrac}       
\usepackage{microtype}      
\usepackage{xcolor}         

\usepackage{algorithm}
\usepackage{algorithmic}
\usepackage{graphicx} 
\usepackage{amsmath}
\usepackage{amssymb}
\usepackage{amsthm}
\usepackage{afterpage}
\usepackage{thmtools}
\usepackage{thm-restate}
\usepackage{lastpage}
\usepackage{enumitem}
\usepackage{booktabs}
\usepackage[pagebackref]{hyperref}
\usepackage{adjustbox}
\usepackage{subcaption}

\renewcommand*{\backrefalt}[4]{%
    \ifcase #1 \footnotesize{(Not cited.)}%
    \or        \footnotesize{(Cited on page~#2)}%
    \else      \footnotesize{(Cited on pages~#2)}%
    \fi}

\setlength{\textfloatsep}{0pt plus 4.0pt minus 0.0pt}
\setlength{\floatsep}{0pt}


\usepackage{amsmath,amsfonts,bm}









\def\eqref#1{equation~\ref{#1}}









\def\1{\bm{1}}










\DeclareMathAlphabet{\mathsfit}{\encodingdefault}{\sfdefault}{m}{sl}
\SetMathAlphabet{\mathsfit}{bold}{\encodingdefault}{\sfdefault}{bx}{n}











\newcommand{\E}{\mathbb{E}}

\newcommand{\R}{\mathbb{R}}




 
\newtheorem{theorem}{Theorem}
\newtheorem{lemma}[theorem]{Lemma} 
 
\newtheorem{remark}[theorem]{Remark}
\newtheorem{corollary}[theorem]{Corollary}

\newcommand{\cmp}{x_{*}} 
\newcommand{\w}{w} 

\title{Optimal Linear Decay Learning Rate Schedules \\ and Further Refinements}

%

\author{%
  Aaron Defazio \\
  Fundamental AI Research Team, Meta \\
  \And
  Ashok Cutkosky \\
  Boston University \\
  \And
  Harsh Mehta \\
  Google Research \\
  \And
  Konstantin Mishchenko \\
  Samsung AI Center \\
}

\begin{document}

\maketitle

\begin{abstract}
Learning rate schedules used in practice bear little resemblance to those recommended by theory.
We close much of this theory/practice gap, and as a consequence are able to derive new \emph{problem-adaptive} learning rate schedules. Our main technical contribution is a refined analysis of learning rate schedules for a wide class of optimization algorithms (including SGD).
When considering only worst-case analysis, our theory predicts that the optimal choice is the \emph{linear decay} schedule where the step-size is set proportional to $1 - t/T$, where $t$ is the current iteration and $T$ is the total number of steps. To go beyond this worst-case analysis, we use the observed gradient norms to derive schedules \emph{refined} for any particular task. 
These refined schedules exhibit learning rate warm-up and rapid learning rate annealing near the end of training. Ours is the first systematic approach to \emph{automatically} yield both of these properties. 
We perform the most comprehensive evaluation of learning rate schedules to date, evaluating across 10 diverse deep learning problems, a series of LLMs, and a suite of logistic regression problems.
We validate that overall, the linear-decay schedule outperforms all commonly used default schedules including cosine annealing. Our adaptive schedule refinement method gives further improvements.
\end{abstract}

\section{Introduction}

For minimizing a function $f$, Stochastic Gradient Descent (SGD) updates the iterate $x_t$ at step $t$ via:
\[
x_{t+1} = x_t - \eta_t g_t,
\]
where $g_t$ is a (possibly stochastic) sub-gradient at $x_t$, and $\eta_t$ is the learning rate (LR) at time $t$. Choosing a sequence of $\eta_t$ for steps $t=1,\dots, T$ is a core problem in optimization.

The learning rate sequence for an optimizer is typically decomposed into two parts: the \textit{baseline} learning rate, indicating the maximum LR to use, and a \textit{schedule}, a sequence that multiplies the baseline LR to give the LR sequence. In this work we focus exclusively on the problem of scheduling. Choosing the right learning rate schedule for best performance is difficult; standard practice is to perform a hyper-parameter sweep over a set of standardized schedules \citep{wu2020wngrad}. 

Setting $\eta_t$ from theory is complicated due to a multitude of potential problem assumptions and the wildly varying schedules that arise from these assumptions. For instance, $\eta_t \propto 1/\sqrt{t}$, $\eta_t \propto 1/t$ and constant schedules $\eta_t=\eta$ are all dictated by three common but different sets of assumptions. Unfortunately, all three work suboptimally in practice for deep learning (Section~\ref{sec:experiments}), and are unpopular in the community \citep{Ge2019}. In this work we focus on two causes for this theory-practice gap:%
\begin{enumerate}[itemsep=0pt,parsep=0pt,topsep=0pt]
    \item Most theory analyzes the average iterate \citep{polyak-avg,ruppert} $\hat{x}_T=\frac{1}{T}\sum_{t=1}^T x_t$ or a randomly sampled iterate. However, in practice the last iterate $x_T$ is used. Analysis of the last iterate is the key to producing practical schedules.
    \item Existing theory for the last iterate often uses crude constant bounds on the gradient norms or curvature. Our new tighter bound involves the entire gradient norm sequence instead, allowing for \textit{problem-adaptive} LR schedules.
\end{enumerate}
\begin{figure*}[t]
\center
\includegraphics[width=\textwidth]{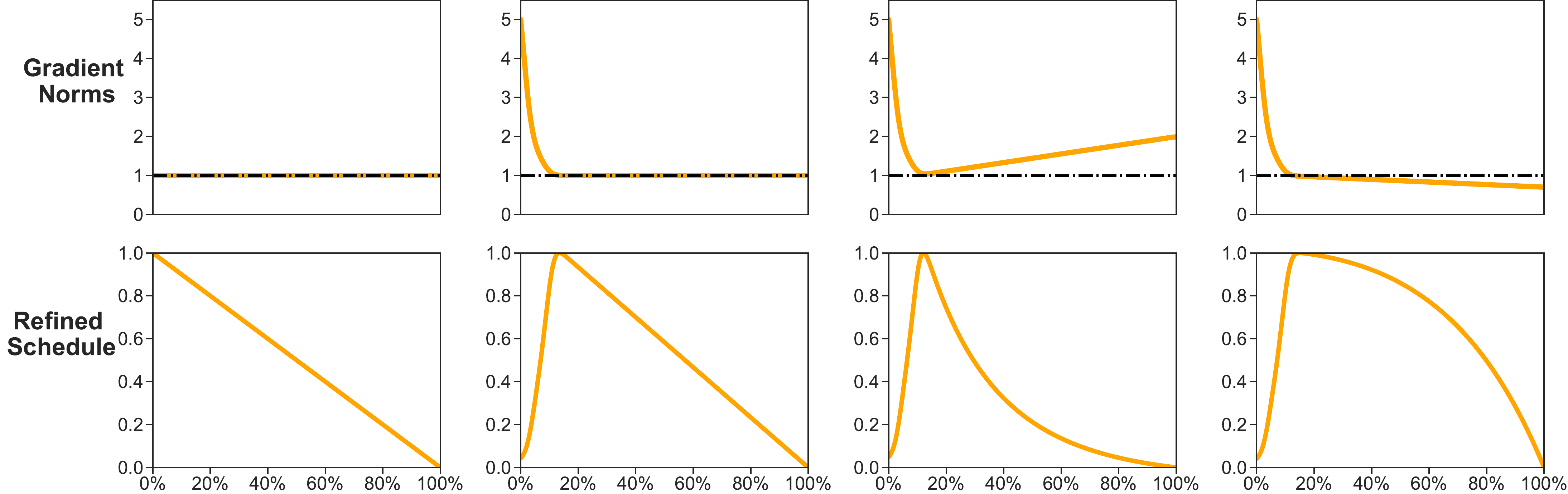}
\caption{\label{fig:example-schedules}Example gradient norm sequences (top row) and the resulting refined schedules given by Algorithm~\ref{alg:refinement} (bottom row). Black dashed line at $y=1$ shown for reference. Percentage of runtime on the x-axis.}
\end{figure*}
\begin{figure}[t]
\begin{algorithm}[H]
\begin{algorithmic}[1]
    \STATE {\bfseries Input:} $G_t = \left\Vert g_t \right \Vert$ sequence of length $T$, smoothing hyper-parameter $\tau > 0$
    \STATE $\hat{G} = \textrm{smoothing\_filter}(G, \textrm{filter\_width}=\tau\,T)$
    \STATE Define $w_{t}=\hat{G}_{t}^{-2}$
    \FOR{$t=1$ {\bfseries to} $T$}
        \STATE \[
    \eta_{t} = w_{t} \sum_{p=t+1}^{T}w_{p}
    \]
    \ENDFOR
    \STATE Return normalized schedule $\eta/\max(\eta)$
\end{algorithmic}
\caption{\label{alg:refinement}Schedule Refinement for SGD}
\end{algorithm}
\end{figure}
Our proposed method is a \emph{refinement} method: it uses a prior training run to produce an improved schedule to use in future runs. The availability of a prior training run is common in practice: a model used in production may need to be regularly retrained on more recent data. The practical variant of our schedule refinement method for SGD is given in Algorithm~\ref{alg:refinement}. Given a sequence of gradient norms produced by a prior run, it outputs a new schedule that is adaptive to the structure of the problem. Mathematically, it is minimizing a novel bound we derive on the function value $f(x_T)$ of the final iterate (Section~\ref{sec:optimize}). The learning rate at each time-step involves a sum of inverse-squared gradient norms from \emph{future} time-steps, a major departure from previous approaches to scheduling.

Our analytical approach also departs from previous approaches by generalizing beyond GD/SGD. Prior analyses of learning rate schedules often rely on the particular form of the iterates to drive the calculations \citep{jain19, zamani2023exact}. Our approach is instead a broad technique that provides learning rate schedules for \emph{any} base optimization algorithm. On a technical level, we design a step-size schedule that converts any algorithm which obtains a vanishing regret into one that ensures a last-iterate guarantee. This means that our refinement technique can provide theoretically-motivated schedules for popular base optimization algorithms like Adam. 
\subsection*{Contributions}
\begin{description}
    \item[Theory] We provide a general analysis of learning rate schedules for arbitrary optimization algorithms. Our approach is the first last iterate analysis to give \emph{exactly} optimal convergence rates for SGD, establishing the linear decay schedule as strongly backed by theory. Our theory suggests an approach for problem-adaptive schedules which we call \emph{refinement}, allowing us to customize the learning rate schedule to each task. 
    \item[Practice] We perform the largest ever comparison of scheduling approaches. Among non-adaptive schedules,\textbf{ we show that warm-up followed by linear decay is the best overall non-adaptive schedule, outperforming cosine decay}. Our refinement method provides significant further improvements, suggesting that our theory provides actionable guidance even for training non-convex neural networks. 
    \item[Theory meets Practice] Our refined schedules exhibit both warmup and decay to zero behavior (Figure \ref{fig:example-schedules}). To our knowledge this is the first time that warmup has arisen \emph{directly from theory} rather than as an empirical heuristic \citep{imagenet1hour}. 
\end{description}
\subsection{Notation}\label{sec:prelim} 
$f\colon\R^d\to \R$ is a convex objective. $x_1,\dots, x_T$ and $z_1,\dots,z_T$ are random vectors in $\R^d$ with $x_1=z_1$, and $\Delta_t \triangleq z_{t+1}-z_t$. $\Delta_t$ will indicate ``baseline'' updates before applying a schedule, and $x_t$ will be iterates after applying the schedule. $g_1,\dots,g_T$ are random vectors in $\R^d$ satisfying $\E[g_t|x_1,\dots,x_t]\in \partial f(x_t)$ ($\E[g_t]=\nabla f(x_t)$ when $f$ is differentiable). $G^2$ is a bound on $\E[\max_t \|g_t\|^2]$. $\w_1,\dots,\w_T$ indicate non-negative random variables in $\R$ such that $g_t$ and $\w_t$ are independent given $x_1,\dots,x_t$. We define $\w_{a:b}\triangleq \sum_{t=a}^b \w_t$. If $a>b$, then $\w_{a:b}\triangleq 0$. $\cmp$ denotes an arbitrary chosen minimizer of $f$. $D\triangleq \|x_1-\cmp\|$ is the ``distance-to-solution'' term, and $f_\star \triangleq f(\cmp)$.

\section{Main Analytical Result}
\label{sec:main-result} 
Our general result is Theorem~\ref{thm:additive}. This allows us to convert any sequence $z_1,\dots,z_T$ with bounded regret $\sum_{t=1}^T \langle g_t,z_t -\cmp\rangle$ into a sequence of iterates $x_1,\dots,x_T$ with a bound on $f(x_T) - f(\cmp)$.
Thus, Theorem~\ref{thm:additive} can be viewed as another addition to the family of reductions from stochastic optimization to regret bounds \citep{cesa2004generalization, cutkosky2019anytime}. 
All proofs can be found in the appendices.

\begin{restatable}{theorem}{thmadditive}\label{thm:additive}
Suppose $z_1,\dots,z_T$ is some arbitrary sequence of vectors. Let $\w_1,\dots,\w_T$ be an arbitrary sequence of non-negative numbers. Recall that we define $\Delta_t = z_{t+1}-z_t$ and $x_1=z_1$. For $t\ge1$, suppose $x_{t+1}$ satisfies:
\begin{align*}
    x_{t+1} = x_t +  \frac{\w_{t+1:T}}{\w_{1:T}} \Delta_t,
\end{align*}
then for any $\cmp$:
\begin{align*}
    \E[f(x_T)-f_*]\le \E\left[\sum_{t=1}^T\frac{1}{\w_{1:T}} \langle \w_t \cdot g_t, z_t - \cmp\rangle \right].
\end{align*}
\end{restatable}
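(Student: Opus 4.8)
The plan is to relate the last iterate $x_T$ to a weighted average of the $z_t$'s, and then invoke convexity and the definition of the subgradients. The key observation is that the update rule for $x_{t+1}$ telescopes. Write $x_{t+1} - x_t = \frac{\w_{t+1:T}}{\w_{1:T}}\Delta_t = \frac{\w_{t+1:T}}{\w_{1:T}}(z_{t+1}-z_t)$. Summing from $t=1$ to $T-1$ and using $x_1=z_1$, I get $x_T = z_1 + \sum_{t=1}^{T-1}\frac{\w_{t+1:T}}{\w_{1:T}}(z_{t+1}-z_t)$. The first thing I would do is carry out an Abel summation (summation by parts) on the right-hand side to re-express $x_T$ as a convex combination $x_T = \sum_{t=1}^T \frac{\w_t}{\w_{1:T}} z_t$. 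Indeed, the coefficient multiplying $z_t$ should collect a $+\frac{\w_{t-1+1:T}}{\w_{1:T}} = \frac{\w_{t:T}}{\w_{1:T}}$ contribution from the $(t-1)$-th difference and a $-\frac{\w_{t+1:T}}{\w_{1:T}}$ contribution from the $t$-th difference, and $\w_{t:T}-\w_{t+1:T}=\w_t$; the boundary terms work out because $\w_{T+1:T}=0$ and the $z_1$ term picks up the remainder. So $x_T$ is the $\w$-weighted average of the baseline iterates — this is the heart of the argument.

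Next, since the $\w_t$ are non-negative and $\sum_t \w_t/\w_{1:T}=1$, convexity of $f$ gives $f(x_T)\le \sum_{t=1}^T \frac{\w_t}{\w_{1:T}} f(z_t)$. I would be slightly careful here: the $\w_t$ are random, so this should be stated as a pointwise (almost sure) inequality before taking expectations, and $\w_{1:T}$ should be assumed positive (or the statement interpreted with the convention that avoids division by zero). Then subtract $f(\cmp)$ and bound each $f(z_t)-f(\cmp)$ using a subgradient inequality. The natural move is to use $\E[g_t\mid x_1,\dots,x_t]\in\partial f(x_t)$ — but note the subgradient is at $x_t$, not $z_t$, so a direct application would need $f(z_t)-f(\cmp)\le \langle \text{(subgradient at }z_t), z_t-\cmp\rangle$. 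I expect the intended route is that $x_t$ and $z_t$ actually coincide in the relevant instantiation, or — more likely given the generality — that one should instead telescope differently: rather than bounding $f(x_T)$ against $f(z_t)$, bound it against $f(x_t)$. Let me reconsider: the cleaner path is to note $x_T = \sum_t \frac{\w_t}{\w_{1:T}} z_t$ is NOT what we want to pair with subgradients at $x_t$; instead one should show $x_t$ itself is a weighted average of $z_1,\dots,z_t$, namely $x_t = \frac{1}{\w_{1:T}}(\w_{1:T}\,z_1 + \sum_{s<t}\w_{s+1:T}\Delta_s)$, and relate things through the Jensen step applied to the final point only. So the honest plan is: (1) telescope to get $x_T=\sum_t\frac{\w_t}{\w_{1:T}}z_t$; (2) apply Jensen to get $f(x_T)-f(\cmp)\le\sum_t\frac{\w_t}{\w_{1:T}}(f(z_t)-f(\cmp))$; (3) since $f$ is convex, $f(z_t)-f(\cmp)\le\langle \partial f(z_t), z_t-\cmp\rangle$ for any subgradient at $z_t$ — and here I would use that $g_t$ is a stochastic subgradient whose conditional expectation lies in $\partial f(x_t)$, which forces the instantiation to have $x_t=z_t$, OR the theorem is implicitly using $g_t$ with $\E[g_t\mid\cdot]\in\partial f(z_t)$. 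Reading the notation section again: $\E[g_t|x_1,\dots,x_t]\in\partial f(x_t)$. So for the bound to close we need the pairing $\langle g_t, z_t-\cmp\rangle$ to dominate $f(z_t)-f(\cmp)$ in expectation, which needs a subgradient at $z_t$. I would therefore flag that the clean statement uses $x_t = z_t$ coincidence (true when the baseline and scheduled runs agree at step $t$, e.g. $\w_t$ all equal — no, that's not it either).

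Given the above tension, the safest concrete plan I would commit to in the writeup: first establish the convex-combination identity $x_T=\sum_{t=1}^T\frac{\w_t}{\w_{1:T}}z_t$ by summation by parts (the main computational step, and the one place an off-by-one or boundary error could creep in — so I would verify it on small $T$, say $T=2,3$); then apply Jensen's inequality for the convex $f$; then bound $f(z_t)-f(\cmp)\le\langle s_t, z_t-\cmp\rangle$ with $s_t$ a subgradient of $f$ at $z_t$, take the tower/conditional expectation using $\E[g_t\mid \cdot]=s_t$ together with the conditional independence of $\w_t$ and $g_t$ (so that $\E[\w_t\langle g_t,z_t-\cmp\rangle\mid\cdot]=\w_t\langle s_t,z_t-\cmp\rangle$), and finally pull $1/\w_{1:T}$ through — treating it as given/measurable if $\w$ is deterministic, or noting it factors through appropriately. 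The main obstacle is getting the summation-by-parts identity exactly right and making the conditioning argument rigorous about where the subgradient lives; I expect the paper's notation to implicitly identify $x_t$ with $z_t$ at the point of subgradient evaluation, which I would either adopt or circumvent by the route above.
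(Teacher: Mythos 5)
Your summation-by-parts identity is correct: unrolling the recursion and collecting coefficients does give $x_T=\sum_{t=1}^T\frac{\w_t}{\w_{1:T}}z_t$, so the last scheduled iterate is the $\w$-weighted average of the baseline sequence. But the step you ultimately commit to — bounding $f(z_t)-f(\cmp)\le\langle s_t,z_t-\cmp\rangle$ with $s_t\in\partial f(z_t)$ and identifying $\E[g_t\mid\cdot]=s_t$ — changes the hypothesis of the theorem. The assumption is $\E[g_t\mid x_1,\dots,x_t]\in\partial f(x_t)$: the gradients are evaluated along the \emph{actual} trajectory $x_t$, and $x_t\neq z_t$ in general (e.g.\ in Corollary~\ref{cor:sgd}, $x_{t+1}=x_t-\eta(1-t/T)g_t$ while $z_{t+1}=z_t-\eta g_t$, with $g_t$ a gradient at $x_t$; the two sequences coincide only at $t=1$). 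Convexity at $x_t$ gives $f(z_t)\ge f(x_t)+\langle\nabla f(x_t),z_t-x_t\rangle$, which points the wrong way, so the Jensen-plus-subgradient-at-$z_t$ route cannot close under the stated hypothesis. What your argument proves is the classical average-iterate online-to-batch conversion for a sequence whose gradients are taken at the $z_t$ themselves — a different (and here unusable) statement, since the whole point of Theorem~\ref{thm:additive} is that $z_t$ is a virtual comparator sequence driven by gradients measured at the scheduled iterates $x_t$, which is what lets one plug in $\Delta_t=-\eta g_t$ and read off a last-iterate guarantee for SGD with a schedule.

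The paper's proof necessarily works at the $x_t$'s: it applies the all-tail summation identity (Lemma~\ref{lem:tail}) to $q_t=f(x_t)-f(\cmp)$, giving
\begin{align*}
q_T=\frac{1}{\w_{1:T}}\sum_{t=1}^T\w_t q_t+\sum_{k=1}^{T-1}\Bigl(\tfrac{1}{\w_{k+1:T}}-\tfrac{1}{\w_{k:T}}\Bigr)\sum_{t=k}^T\w_t(q_t-q_k),
\end{align*}
then uses the subgradient inequalities $\E[q_t]\le\E[\langle g_t,x_t-\cmp\rangle]$ and $\E[q_t-q_k]\le\E[\langle g_t,x_t-x_k\rangle]$ (both legitimately at $x_t$), and finally collects the coefficients of $\langle g_t,x_p\rangle$ and $\langle g_t,\cmp\rangle$ to recognize the resulting expression as exactly $\sum_t\frac{\w_t}{\w_{1:T}}\langle g_t,z_t-\cmp\rangle$ for the $z_t$ defined by the stated recursion (Lemma~\ref{lem:rearrange}, then a telescoping check that this recursion is equivalent to $x_{t+1}=x_t+\frac{\w_{t+1:T}}{\w_{1:T}}\Delta_t$). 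That coefficient-rearrangement step is the missing idea in your proposal; without it, or some substitute that keeps all subgradient evaluations at $x_t$, the bound as stated is not reached.
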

Let us take a moment to consider the implications of this Theorem in the simplest setting of $\w_t=1$ for all $t$. In this case, it is well-known that  by setting $\Delta_t = -\eta g_t$ for $\eta = \frac{D}{G\sqrt{T}}$, one guarantees $\sum_{t=1}^T \langle g_t, z_t - \cmp\rangle \le DG\sqrt{T}$. Thus, we immediately obtain the following important corollary:
\begin{restatable}{corollary}{corsgd}\label{cor:sgd}
Set $x_{t+1} = x_t - \frac{D}{G\sqrt{T}}\left(1-\frac{t}{T}\right) g_t$, then:
\begin{align*}
    \E[f(x_T) - f_*]&\le \frac{DG}{\sqrt{T}}.
\end{align*}
\end{restatable}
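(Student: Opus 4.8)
The plan is to derive Corollary~\ref{cor:sgd} directly from Theorem~\ref{thm:additive} by instantiating the free parameters $\w_t$ and the baseline sequence $z_t$. Take $\w_t = 1$ for all $t$, so that $\w_{1:T} = T$ and $\w_{t+1:T} = T - t$. Then the update prescribed by Theorem~\ref{thm:additive} becomes
\begin{align*}
x_{t+1} = x_t + \frac{T-t}{T}\Delta_t.
\end{align*}
Choosing the baseline updates $\Delta_t = -\eta g_t$ with the constant step size $\eta = \frac{D}{G\sqrt{T}}$, this reads $x_{t+1} = x_t - \eta\bigl(1 - \tfrac{t}{T}\bigr)g_t$, which is exactly the SGD recursion with $\eta_t = \frac{D}{G\sqrt{T}}\bigl(1 - \tfrac{t}{T}\bigr)$ claimed in the corollary. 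So the first step is just to check that the hypothesized $x_t$ sequence coincides with the one in the theorem.

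Next I would bound the right-hand side of the theorem's conclusion, which here is $\E\bigl[\frac{1}{T}\sum_{t=1}^T \langle g_t, z_t - \cmp\rangle\bigr]$. The sequence $z_t$ is the ``baseline'' trajectory defined by $z_1 = x_1$ and $z_{t+1} = z_t - \eta g_t$: plain online (sub)gradient descent with constant step size $\eta$. The key input is the standard OGD regret bound: for $z_{t+1} = z_t - \eta g_t$ one has $\sum_{t=1}^T \langle g_t, z_t - \cmp\rangle \le \frac{\|z_1 - \cmp\|^2}{2\eta} + \frac{\eta}{2}\sum_{t=1}^T \|g_t\|^2$. Using $\|z_1 - \cmp\| = D$ and the bound $\sum_t \|g_t\|^2 \le T \max_t\|g_t\|^2$, then taking expectations and invoking $G^2 \ge \E[\max_t\|g_t\|^2]$, we get $\E\bigl[\sum_{t=1}^T \langle g_t, z_t - \cmp\rangle\bigr] \le \frac{D^2}{2\eta} + \frac{\eta T G^2}{2}$. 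Plugging in $\eta = \frac{D}{G\sqrt{T}}$ makes both terms equal to $\frac{DG\sqrt{T}}{2}$, so the sum is at most $DG\sqrt{T}$; dividing by $\w_{1:T} = T$ gives the claimed bound $\frac{DG}{\sqrt{T}}$.

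There is no real obstacle here — the corollary is essentially a one-line specialization once Theorem~\ref{thm:additive} is in hand. The only point requiring a little care is being explicit about \emph{which} regret bound is being used and that it applies to a (sub)gradient sequence satisfying $\E[g_t \mid x_1,\dots,x_t] \in \partial f(x_t)$; one should note that the inner products $\langle g_t, z_t - \cmp\rangle$ are handled by the deterministic OGD inequality pointwise in the randomness (since $z_t$ is determined by $g_1,\dots,g_{t-1}$), and only at the end does one take expectations and apply convexity implicitly through the theorem statement. A secondary bookkeeping detail is the slight notational tension between $\eta_t$ ranging over $t=1,\dots,T$ while the last ``effective'' step has coefficient $1 - T/T = 0$; this is harmless — it simply means the final update contributes nothing, consistent with evaluating $f$ at $x_T$.
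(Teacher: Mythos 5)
Your proposal is correct and follows essentially the same route as the paper: instantiate Theorem~\ref{thm:additive} with $\w_t=1$ and $\Delta_t=-\frac{D}{G\sqrt{T}}g_t$, then invoke the classical online gradient descent regret bound (which the paper simply cites from \citet{zinkevich2003online} and you spell out explicitly) to bound $\sum_{t=1}^T\langle g_t,z_t-\cmp\rangle$ by $DG\sqrt{T}$ and divide by $\w_{1:T}=T$. The only difference is expository detail; there is no gap.
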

The sequence $x_t$ in Corollary~\ref{cor:sgd} is simply stochastic gradient descent ($x_{t+1}=x_t -\eta_t g_t$) equipped with a \emph{linear decay learning rate schedule}:
\begin{equation}
\eta_{t}=\frac{D}{G\sqrt{T}}\left(1-\frac{t}{T}\right). \label{eq:lindec}
\end{equation}
Linear decay emulates the effects of iterate averaging, as the contribution from each gradient to the returned point is approximately the same as it would be in an average; the gradient $g_{T/2}$ appears in half the points in the average, and so its weight is halved. More generally, the gradient $g_{t}$ appears in $T-t$ out of $T$ points and so its weight is $1-t/T$.

This bound has a significantly better constant than previous schedules for last-iterate convergence of SGD in this setting \citep{jain19}, and matches recent work by \cite{zamani2023exact}, who were the first to show that this schedule is actually optimal for (non-stochastic) gradient descent for the Convex $G$-Lipschitz complexity class. Our analysis is more general, and further more, our work is the first to show that in the stochastic case, last-iterate rates can \emph{exactly} match the information-theoretically optimal rate, which normally requires averaging to achieve.

\subsection{Optimizing the bound for data-dependent schedules}\label{sec:optimize}
We have now seen that setting $\w_t=1$ for all $t$ recovers the linear decay schedule, and can obtain the worst-case optimal convergence rates. However, optimizing for the worst case usually yields overly pessimistic behavior. In this section, we build more adaptive schedules that obtain better results on real data. To do this, we simply choose $\w_t$ so as to optimize the bound in Theorem~\ref{thm:additive}.

\begin{restatable}{theorem}{thmoptimal}\label{thm:optimal}
Suppose that $x_{t+1} = x_t - \eta_t g_t$ with $\eta_t = \frac{\w_t \w_{t+1:T}}{\w_{1:T}}$. Then we have:
\begin{align*}
    \E[f(x_T)-f_*]&\le \E\left[\frac{1}{2\cdot \w_{1:T}}\left(D^2 + \sum_{t=1}^T \w_t^2 \|g_t\|^2\right)\right].
\end{align*}
Moreover, for a fixed sequence $\|g_1\|^2,\dots,\|g_T\|^2$, the value of $\frac{1}{2\cdot \w_{1:T}}(D^2 + \sum_{t=1}^T \w_t^2 \|g_t\|^2)$ is minimized by setting:
\begin{align*}
    \w_t = \|g_t\|^{-2} \frac{D}{\sqrt{\sum_{p=1}^T \|g_p\|^{-2}}}.
\end{align*}
\end{restatable}

Theorem~\ref{thm:optimal} suggests that if we knew the sequence of gradient norms ahead of time, then we could optimize the weights $\w_t$ (and therefore the learning rate schedule $\eta_t$) by setting $\w_t\propto \|g_t\|^{-2}$. This yields a practical approach for \emph{refining} a learning rate schedule based on empirical observations. First, perform one run using a baseline schedule to observe the sequence of gradient norms. Then, use these norms to compute an optimal schedule via Theorem~\ref{thm:optimal}. The constant factor $D=\|x_1-u\|$ appearing in the value for $w_t$ plays the role of the ``scale factor'' typically applied to learning rate schedules. A line of recent work has shown that this quantity can be efficiently estimated online without significant downsides \citep{mcmahan2012no, orabona2016coin, cutkosky2018black, mhammedi2020lipschitz, zhang2022pde, carmon2022making, ivgi2023dog, khaled2023dowg, cutkosky2023mechanic}.

The act of changing the schedule from a baseline to a refined data-dependent schedule will change the gradient norm sequence, which may in turn indicate that a  different schedule would have been optimal. Our approach relies on the practical observation that the smoothed gradient norm sequence does not change significantly after refinement. Smoothing the gradient norm sequence is necessary to average over sampling variation. our implementation of Algorithm~\ref{alg:refinement} uses a median smoothing filter on the gradient norms:
\[
\hat{G} = \textrm{med\_filter}(G, \textrm{width}=\tau\,T, \textrm{pad}=(\textrm{nearest}, \textrm{reflect}))
\]
Median filters provide better estimates at the beginning and end of the schedule compared to other smoothing filters.

\subsection{Refined Schedules for Per-Coordinate Optimizers}
Theorem~\ref{thm:optimal} provides an analysis that recovers schedules for SGD. However, practitioners commonly use optimization methods with more complicated \emph{per-coordinate} updates or other preconditioning schemes such as Adam. In Appendix~\ref{sec:percoordinate} we provide an alternative version of Theorem~\ref{thm:optimal} that applies to such algorithms (Theorem~\ref{thm:percoordinate}). This result enables us to develop schedules tailored to any optimization algorithm. However, actually building this schedule in practice may require inspecting the algorithm's internal state, which may be difficult or inefficient. For per-coordinate algorithms like Adam, we suggest simply setting $w_t\propto  1/\|g_t\|_1$ as an approximation (see Section~\ref{sec:l1-norm}).

Figure~\ref{fig:real-schedules} gives the Refined schedules on a set of standard benchmark machine learning problems when initially trained using linear decay schedules to provide the gradient norm sequences. Full details of the models and training setup are in the Appendix. Gradient $\ell_2$ norms are shown for SGD trained problems (ImageNet, RCNN) and $\ell_1$ norms for the Adam trained problems. Our single hyper-parameter $\tau=0.1$ was tuned visually so that the resulting schedules balance smoothness and capturing structure.
\begin{figure*}[t]
\center
\includegraphics[width=0.49\textwidth]{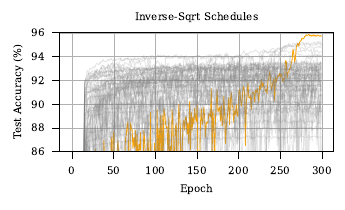}\includegraphics[width=0.49\textwidth]{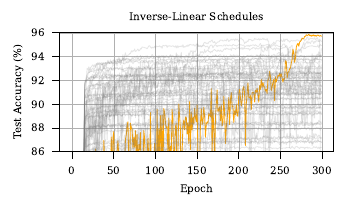}
\vspace{-15pt}
\caption{Training curves on CIFAR-10 for a sweep over inverse-sqrt and inverse-linear hyper-parameters. A linear-decay schedule baseline is shown in orange. All combinations are out-performed by the linear-decay schedule.\label{fig:cifar10_inv_schedules}
}
\end{figure*}
\begin{table*}[t]
\caption{Logistic Regression Experiments (Train Error Rate \%)\label{tab:convex}}
\begin{centering}
\begin{adjustbox}{width=1\textwidth}
\begin{tabular}{ccccccc}
\toprule 
\textbf{Problem}  & \textbf{Stepwise}  & \textbf{Cosine}  & \textbf{Linear}  & \textbf{Refined} $\left\Vert g\right\Vert _{2}^{2}$  & \textbf{Refined} $\left\Vert g\right\Vert _{1}$ & \textbf{Refined}\tabularnewline
\midrule 
Aloi  & $13.38$ {\tiny $\pm0.05$}  & $12.69$ {\tiny $\pm0.05$}  & $12.76$ {\tiny $\pm0.06$}  & $\mathbf{11.75}$ {\tiny$\pm0.01$}  & $12.24$ {\tiny $\pm0.04$} & $12.30$ {\tiny $\pm0.05$}\tabularnewline
 Glass  & $31.39$ {\tiny $\pm0.16$}  & $30.82$ {\tiny $\pm0.22$}  & $30.72$ {\tiny $\pm0.18$}  & $\mathbf{30.05}$ {\tiny$\pm0.44$}  & $\mathbf{29.62}$ {\tiny$\pm0.37$} & $\mathbf{30}$ {\tiny$\pm0.41$}\tabularnewline
Iris  & $\mathbf{1.39}$ {\tiny$\pm0.000$}  & $\mathbf{1.39}$ {\tiny$\pm0.000$}  & $\mathbf{1.39}$ {\tiny$\pm0.000$}  & $\mathbf{1.46}$ {\tiny$\pm0.07$}  & $\mathbf{1.46}$ {\tiny$\pm0.07$} & $\mathbf{1.46}$ {\tiny$\pm0.07$} \tabularnewline
Letter  & $22.24$ {\tiny $\pm0.008$}  & $\mathbf{22.24}$ {\tiny$\pm0.01$}  & $\mathbf{22.20}$ {\tiny$\pm0.02$}  & $\mathbf{22.23}$ {\tiny$\pm0.03$}  & $\mathbf{22.20}$ {\tiny$\pm0.03$} & $\mathbf{22.20}$ {\tiny$\pm0.03$} \tabularnewline
Pendigits  & $4.70$ {\tiny $\pm0.02$}  & $4.67$ {\tiny $\pm0.01$}  & $\mathbf{4.62}$ {\tiny$\pm0.03$}  & $\mathbf{4.56}$ {\tiny$\pm0.02$}  & $\mathbf{4.58}$ {\tiny$\pm0.02$} & $\mathbf{4.56}$ {\tiny$\pm0.04$} \tabularnewline
Sensorless  & $11.84$ {\tiny $\pm0.09$}  & $11.30$ {\tiny $\pm0.09$}  & $11.29$ {\tiny $\pm0.09$}  & $10.71$ {\tiny $\pm0.08$}  & $\mathbf{10.08}$ {\tiny$\pm0.05$} & $\mathbf{10.11}$ {\tiny$\pm0.05$} \tabularnewline
Vehicle  & $18.83$ {\tiny $\pm0.09$}  & $18.49$ {\tiny $\pm0.05$}  & $18.55$ {\tiny $\pm0.06$}  & $\mathbf{18.21}$ {\tiny$\pm0.12$}  & $\mathbf{18.19}$ {\tiny$\pm0.08$} & $\mathbf{18.21}$ {\tiny$\pm0.1$} \tabularnewline
Vowel  & $23.43$ {\tiny $\pm0.08$}  & $22.99$ {\tiny $\pm0.09$}  & $22.94$ {\tiny $\pm0.10$}  & $\mathbf{22.48}$ {\tiny$\pm0.12$}  & $\mathbf{22.44}$ {\tiny$\pm0.08$} & $\mathbf{22.41}$ {\tiny$\pm0.08$} \tabularnewline
\bottomrule 
\end{tabular}
\end{adjustbox}
\par\end{centering}
\end{table*}%
\section{Experiments}
For each problem considered, we first performed a sweep of learning rates on a grid [$10^i$, $2\times 10^i$, $5\times 10^i$] for varying $i$, separately for each schedule. We then ran multiple seeds using the best learning-rate from the grid search. Mean and standard error of the mean was tabulated for each schedule. The best result for each method, up to a statistical significance level of 0.05 using a paired two-sample t-test, is highlighted in bold.
Specific details of the hyper-parameters used are given in Appendix~\ref{sec:experiment_details}. All non-adaptive schedules included a fixed learning-rate warmup, with length following standard practices for the problem. Our step-wise schedule divides by 10 at 30-60-90 percent completion, following standard practices from ImageNet training. The Refined schedules were calculated using the linear schedule to generate the initial gradient norm sequence.
\subsection{Convex problems}
To validate the effectiveness of our method on convex problems, we performed a comparison across 8 commonly used classification problems from the LIBSVM repository, with separable problems excluded (see Section~\ref{sec:limitations}). We used a logistic regression loss together with the Adam optimizer with $\beta=(0.9, 0.95)$, with batch size 16 and trained for 100 epochs. Table~\ref{tab:convex} demonstrates that both the linear decay schedule and our refinement schedule consistently either match or out-perform the cosine schedule. The linear decay schedule matches the cosine schedule on every problem (up to statistical significance), and out-performs it on two problems. \textbf{Overall, the Refined schedule out-performs all previous baseline schedules on the majority of problems}.
\begin{figure*}[t]
\center
\includegraphics[width=\textwidth]{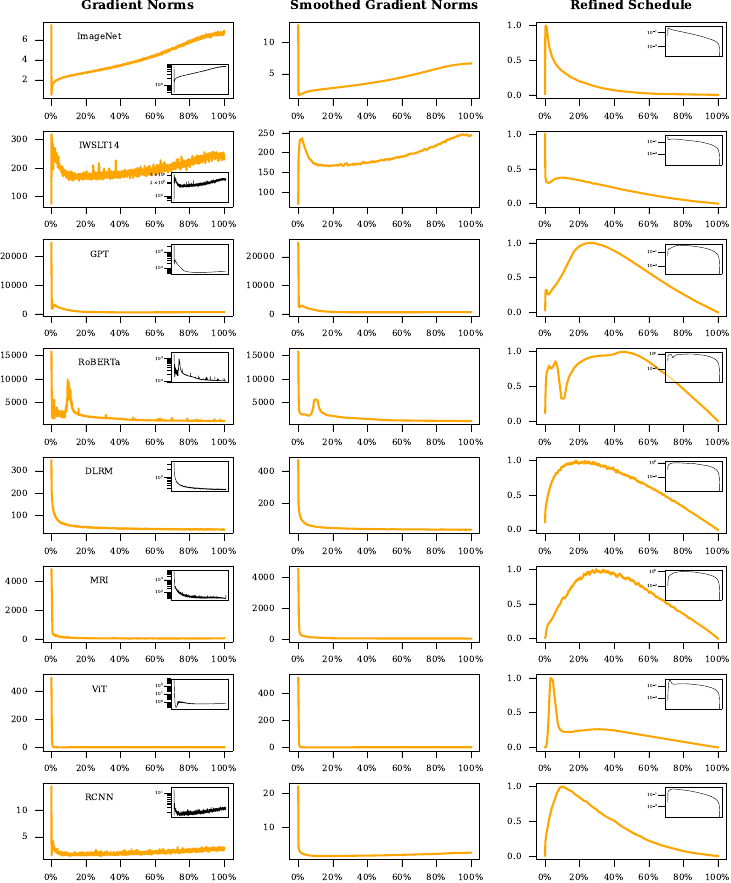}
\caption{\label{fig:real-schedules}Gradient Norm sequences and the resulting Refined schedules, generated using an initial linear decay schedule with warmup for the initial run. Log scale views are inset for scale reference.}
\end{figure*}
\afterpage{\clearpage}
\label{sec:l1-norm}Rather than using the Adam specific weighting for deriving the Refined schedules, it is often convenient to use other norms, particularly if existing logs are available using these other norms. In Table~\ref{tab:convex} we present results using both the $\ell_1$ norm and $\ell_2$ norm-squared weighting. Both are shown to maintain the advantages of the refinement technique, out-performing the non-refined schedules consistently. Given the ease of logging $\ell_1$ norms compared to the internal Adam state, we advocate for weights to be the inverse of the $\ell_1$ norm (no squaring) when using Adam.
\subsection{Deep Learning experiments}
\label{sec:experiments}
 Classical any-time learning rates schedules such as $1/\sqrt{t}$ and $1/t$ are commonly used in theoretical analysis of optimization methods, yet they are rarely used by practitioners. To illustrate why, in Figure~\ref{fig:cifar10_inv_schedules}, we sweep over the learning rate $\eta$ and offset $\beta$ for schedules of the form: 
\begin{equation*}
\eta \frac{\beta}{\beta + t}\quad\text{and}\quad 
\eta \frac{\sqrt{\beta}}{\sqrt{\beta + t}}.
\end{equation*}
A 5\% duration learning rate warmup was also included in each case. \textbf{Even with 60 hyper-parameter combinations tried for each family, neither are able to match the performance of the linear-decay schedule}. For our deep learning experiments we implemented two commonly used practical variants of these schedules, since sweeping the $\beta$ hyper-parameter is computationally impractical. The vanilla version uses $\beta=1$, and the \emph{offset} version uses $\beta$ set to match the duration of the warmup period (a common setting due to its implementation in FairSeq).
\subsection{Large-scale schedule benchmark}
We performed a large-scale comparison of schedules across common deep learning benchmarks, (pre-)training each problem from scratch for each task: 
\begin{description}[itemsep=0pt,parsep=0pt,topsep=3pt]
\item[CIFAR10] For a small-scale image-classification problem, we chose CIFAR10 \citep{cifar}. A high-performance Wide ResNet architecture was used \citep{BMVC2016_87}. Test error percentage is reported.
\item[CIFAR100] A more realistic yet still small scale benchmark. We used a compact DenseNet architecture \citep{densenet} to increase the diversity of model architectures tested. Test error percentage is reported.
\item[ImageNet] We used a ResNet-50 architecture \citep{he2016deep} for the ImageNet \citep{ILSVRC15} classification task. We use a standard data-augmentation pipeline following other recent comparisons in the optimization literature. Test error percentage is reported.
\item[IWSLT14] A small-scale German-English translation task \citep{cettolo2014report} using a LSTM model \citep{wiseman-rush-2016-sequence}. Test Perplexity is reported.
\item[GPT] A modern small (162M parameter) GPT-style auto-regressive transformer model \citep{gpt} trained on the large Book-Wiki corpus \citep{bookcorpus}. Test Perplexity is reported.
\item[RoBERTa] A masked autoencoder variant \citep{liu2019roberta} also trained on the Book-Wiki corpus. Test Perplexity is reported.
\item[ViT] A high-performance ImageNet classifier using a Vision Transformer \citep{dosovitskiy2020image}. Test error percentage is reported.
\item[DLRM] A modern recommendation system engine \citep{DLRM19} trained on the open Criteo Kaggle Display Advertising dataset. Test Accuracy is reported.
\item[MRI] A large stacked U-Net architecture \citep{
sriram2020end} trained on the fastMRI dataset \citep{zbontar2018fastmri}, an image-to-image regression task from the medical imaging domain. Test set metric $100 \cdot (1-\text{SSIM})$ is reported.
\item[RCNN] The object detection method Faster-RCNN \citep{faster_rcnn} trained on the COCO 2017 dataset, using a ResNet-50 ImageNet pretrained backbone. $100 - \text{box AP}$ is reported.
\end{description}
\begin{figure}[t]\label{fig:llm-schedules}
\center
\includegraphics[trim={60pt 200pt 60pt 200pt},clip,width=0.49\linewidth]{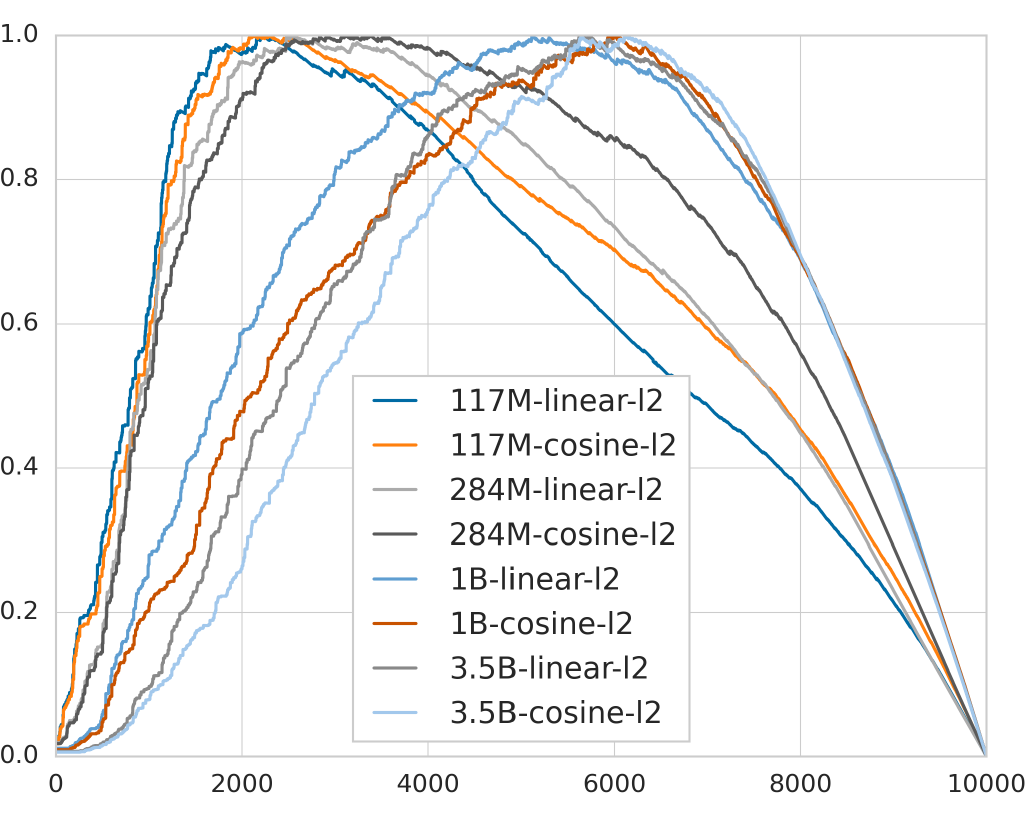} 
\includegraphics[trim={60pt 200pt 60pt 200pt},clip,width=0.49\linewidth]{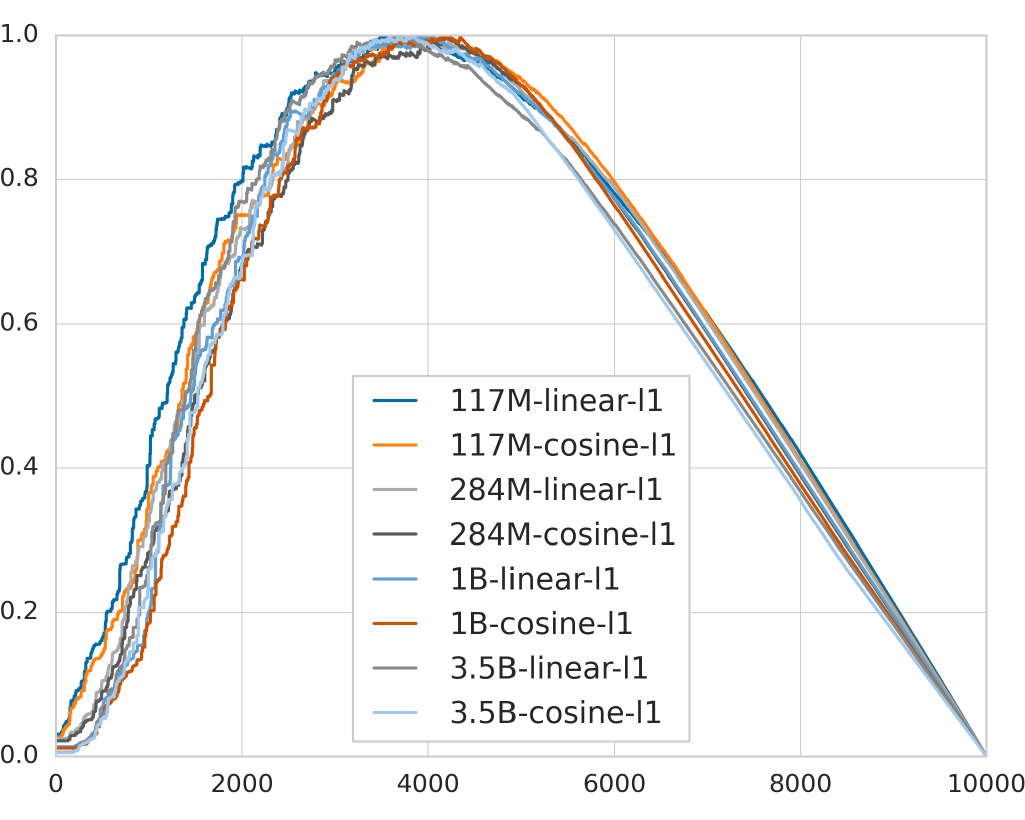} 
\caption{\label{fig:llm_schedules} Normalized learning rate schedules learned on vanilla Transformer-based LLM trained on C4 dataset. Left: schedules obtained when setting $\w_t\propto \|g_t\|^{-2}$. Right: schedules when $w_t\propto  1/\|g_t\|_1$. We find that $\ell_1$ norm is much more consistent across model sizes and baseline schedules. Thus, we used $w_t\propto  1/\|g_t\|_1$ weighting from linear baselines to obtained Refined schedule mentioned in Table \ref{tab:llm}. The x axis is training steps.}
\end{figure}
\begin{table*}[t]
\caption{\label{tab:classical}Classical Schedule Comparison Against the Linear Schedule (lower = better).}
\vspace{-0.5em}
\begin{center}
\begin{adjustbox}{width=1\textwidth}
\begin{tabular}{cccccccc}
\toprule 
\textbf{Problem} & \textbf{Flat} & \textbf{1/t} & \textbf{1/sqrt} & \textbf{Offset 1/t} & \textbf{Offset 1/sqrt} & \textbf{Linear} \tabularnewline
\midrule 
CIFAR10 &  $8.04$ {\tiny $ \pm .13$} & $5.42$ {\tiny $ \pm .28$} & $6.37$ {\tiny $ \pm .41$} & $9.23$ {\tiny $ \pm .08$} & $6.58$ {\tiny $ \pm .06$} & $\mathbf{4.35}$ {\tiny $ \pm .05$} \tabularnewline
CIFAR100 &  $30.43$ {\tiny $ \pm .20$} & $26.58$ {\tiny $ \pm .11$} & $29.09$ {\tiny $ \pm .16$} & $32.80$ {\tiny $ \pm .07$} & $27.62$ {\tiny $ \pm .13$} & $\mathbf{22.11}$ {\tiny $ \pm .08$} \tabularnewline
ImageNet &  $33.00$ {\tiny $ \pm .14$} & $26.48$ {\tiny $ \pm .06$} & $28.35$ {\tiny $ \pm .05$} & $47.94$ {\tiny $ \pm .08$} & $27.34$ {\tiny $ \pm .06$} & $\mathbf{23.11}$ {\tiny $ \pm .07$} \tabularnewline
IWSLT14 &  $8.07$ {\tiny $ \pm .02$} & $7.62$ {\tiny $ \pm .01$} & $7.52$ {\tiny $ \pm .01$} & $12.89$ {\tiny $ \pm .06$} & $8.48$ {\tiny $ \pm .01$} & $\mathbf{7.10}$ {\tiny $ \pm .01$} \tabularnewline
GPT &  $20.20$ {\tiny $ \pm .000$} & $18.99$ {\tiny $ \pm .04$} & $19.48$ {\tiny $ \pm .02$} & $27.85$ {\tiny $ \pm .05$} & $22.88$ {\tiny $ \pm .006$} & $\mathbf{18.60}$ {\tiny $ \pm .02$} \tabularnewline
RoBERTa &  $4.52$ {\tiny $ \pm .005$} & $4.25$ {\tiny $ \pm .007$} & $4.33$ {\tiny $ \pm .01$} & $5.33$ {\tiny $ \pm .02$} & $5.15$ {\tiny $ \pm .02$} & $\mathbf{3.94}$ {\tiny $ \pm .007$} \tabularnewline
DLRM &  $\mathbf{20.95}$ {\tiny $ \pm .006$} & $47.59$ {\tiny $ \pm 6.45$} & $45.99$ {\tiny $\pm 5.98$} & $\mathbf{20.94}$ {\tiny $ \pm .007$} & $20.99$ {\tiny $ \pm .009$} & $\mathbf{20.94}$ {\tiny $ \pm .006$} \tabularnewline
MRI &  $\mathbf{9.00}$ {\tiny $ \pm .04$} & $8.91$ {\tiny $ \pm .01$} & $\mathbf{8.98}$ {\tiny $ \pm .04$} & $9.53$ {\tiny $ \pm .08$} & $9.16$ {\tiny $ \pm .05$} & $\mathbf{8.88}$ {\tiny $ \pm .02$} \tabularnewline
ViT &  $30.11$ {\tiny $ \pm .27$} & $28.36$ {\tiny $ \pm .40$} & $28.53$ {\tiny $ \pm .15$} & $73.84$ {\tiny $ \pm 6.08$} & $50.36$ {\tiny $ \pm 12.39$} & $\mathbf{24.82}$ {\tiny $ \pm .31$} \tabularnewline
RCNN &  $65.43$ {\tiny $ \pm .12$} & $63.38$ {\tiny $ \pm .05$} & $64.13$ {\tiny $ \pm .10$} & $79.32$ {\tiny $ \pm .07$} & $69.25$ {\tiny $ \pm .07$} & $\mathbf{60.98}$ {\tiny $ \pm .02$} \tabularnewline 
\bottomrule 
\end{tabular}
\end{adjustbox}
\end{center}
\end{table*}%
\begin{table*}[t]
\caption{\label{tab:modern}Modern Schedule Comparison (lower = better).}\vspace{-0.5em}
\begin{center}
\begin{tabular}{ccccc}
\toprule 
\textbf{Problem} & \textbf{Stepwise} & \textbf{Cosine} & \textbf{Linear} & \textbf{Refinement} \tabularnewline
\midrule
CIFAR10 &  $4.53$ {\tiny $ \pm .03$} & $\mathbf{4.27}$ {\tiny $ \pm .04$} & $\mathbf{4.35}$ {\tiny $ \pm .05$} & - \tabularnewline
CIFAR100 &  $22.78$ {\tiny $ \pm .10$} & $22.59$ {\tiny $ \pm .09$} & $\mathbf{22.11}$ {\tiny $ \pm .08$} & - \tabularnewline
ImageNet &  $23.51$ {\tiny $ \pm .07$} & $\mathbf{23.10}$ {\tiny $ \pm .06$} & $\mathbf{23.11}$ {\tiny $ \pm .07$} & $\mathbf{23.12}$ {\tiny $ \pm 0.03$} \tabularnewline
IWSLT14 &  $7.43$ {\tiny $ \pm .01$} & $7.17$ {\tiny $ \pm .009$} & $7.10$ {\tiny $ \pm .01$} & $\mathbf{6.92}$ {\tiny $ \pm .03$}\tabularnewline
GPT &  $19.70$ $\pm.03$ & $18.65$ {\tiny $ \pm .02$} & $18.60$ {\tiny $ \pm .02$} & $\mathbf{18.29}$ {\tiny $ \pm .005$}\tabularnewline
RoBERTa &  $4.36$ {\tiny $ \pm .01$} & $4.07$ {\tiny $ \pm .000$} & $3.94$ {\tiny $ \pm .007$} & $\mathbf{3.86}$ {\tiny $ \pm .005$} \tabularnewline
DLRM &  $20.95$ {\tiny $ \pm .008$} & $\mathbf{20.94}$ {\tiny $ \pm .005$} & $\mathbf{20.94}$ {\tiny $ \pm .006$} & $\mathbf{20.94}$ {\tiny $ \pm .009$} \tabularnewline
MRI &  $8.97$ {\tiny $ \pm .02$} & $8.90$ {\tiny $ \pm .03$} & $8.88$ {\tiny $ \pm .02$} & $\mathbf{8.85}$ {\tiny $ \pm .01$} \tabularnewline
ViT & $26.27$ {\tiny $ \pm .33$} & $\mathbf{24.56}$ {\tiny $ \pm .15$} & $24.82$ {\tiny $ \pm .31$} & $25.53$ {\tiny $ \pm .16$} \tabularnewline
RCNN &  $61.76$ {\tiny $ \pm .06$} & $\mathbf{61.00}$ {\tiny $ \pm .04$} & $\mathbf{60.98}$ {\tiny $ \pm .02$} & $61.21$  {\tiny $\pm .03$}\tabularnewline
\bottomrule 
\end{tabular}
\end{center}
\end{table*}%
\begin{table}[ht]
\footnotesize
\begin{subtable}[t]{0.53\linewidth}\centering
\begin{adjustbox}{width=\linewidth}
\begin{tabular}{ccccc}
\toprule
\textbf{Schedule} & \textbf{1 Epoch} & \textbf{5 Epochs} & \textbf{30 Epochs} \\
\midrule 
Cosine & $35.80$ {\footnotesize $ \pm 0.27$} & $15.07$ {\footnotesize $ \pm 0.11$} & $\mathbf{6.08}$ {\footnotesize $ \pm 0.03$} \tabularnewline
Lin. Decay & $\mathbf{33.84}$ {\footnotesize $ \pm 0.17$} & $\mathbf{14.36}$ {\footnotesize $ \pm 0.07$} & $\mathbf{6.12}$ {\footnotesize $ \pm 0.08$} \tabularnewline
Refined & $\mathbf{34.04}$ {\footnotesize $ \pm 0.19$} & $\mathbf{14.59}$ {\footnotesize $ \pm 0.08$} & $6.24$ {\footnotesize $ \pm 0.06$} \tabularnewline
\bottomrule
\end{tabular}
\end{adjustbox}
\caption{CIFAR10 Training Time Ablations (Test Error \%)\label{tab:cifar-short}}
\end{subtable}
\begin{subtable}[t]{0.47\linewidth}\centering
\begin{adjustbox}{width=\linewidth}
\begin{tabular}{ccccc}
\toprule
\textbf{Schedule} & \textbf{117M} & \textbf{284M} & \textbf{1B} & \textbf{3.5B} \\
\midrule
Cosine & $3.089$ & $2.891$ & $2.729$ & $2.631$ \tabularnewline
Linear Decay & $3.087$ & $2.888$ & $2.725$ & $\mathbf{2.625}$ \tabularnewline
Refined & $\mathbf{3.075}$ & $\mathbf{2.884}$ & $\mathbf{2.722}$ & $2.634$ \tabularnewline
\bottomrule
\end{tabular}
\end{adjustbox}
\caption{LLM Ablations (C4 validation loss)\label{tab:llm}}
\end{subtable}
\end{table}%

Tables \ref{tab:classical} \& \ref{tab:modern} show the results. We break the schedules into two categories, classical and modern. \textbf{The modern schedules consistently outperform the classical schedules, often by large margins}. Although this is common folk-law in deep learning, we are not aware of any existing large-scale experiments establishing this. Our comparison of modern schedules shows a clear hierarchy among the schedules. The Stepwise schedule is dominated by the Cosine schedule, and \textbf{the Linear schedule matches or outperforms the Cosine schedule on all problems except ViT}. The refined schedule further outperforms the Linear schedule on 5 of the 10 problems, but shows mixed results on ViT and RCNN. The refinement process produced degenerate schedules that fail on the CIFAR problems, we discuss this in  Appendix~\ref{sec:limitations}.
\subsection{Cosine schedule ablations}
To further investigate the failure modes of the cosine schedule, we ran a series of shorter duration training runs on CIFAR10. Our premise was that the cosine schedule is heavily over-fit to long training duration computer vision problems.
As shown in Table~\ref{tab:cifar-short}, the cosine schedule begins to under-perform the linear decay schedule and the refined schedules when training for less than 30 epochs. In contrast, while the refined schedule also under-performs for longer duration training it has no statistically significant differences from the linear decay schedule for shorter duration training, where they both perform consistently better than the cosine schedule.

\subsection{Large Language Model size ablations}
In addition to the results above, we validate our insights on Language Models by performing an additional set of experiments directly comparing to the setting explored in Chinchilla \citep{hoffmann2022an} where we train a vanilla Transformer model on the C4 dataset \citep{2020t5} for 10k steps with a token batch size of around 524k ($2^{19}$) and learning rate $0.0002$. Starting with \citet{hoffmann2022an}, most recent LLMs employ the AdamW optimizer with Linear Warmup and Cosine Decay. We perform a head-to-head comparison of Cosine decay, Linear decay and Refined schedules (the latter is refined from Linear decay). As shown in Table \ref{tab:llm}, \textbf{contrary to popular wisdom, linear decay performs better than cosine decay across all model sizes}. The Refined schedule further outperforms linear decay for all but the 3.5B sized model.

\section{Discussion}
Although our theoretical analysis assumes that the underlying optimization problems are convex, as we have demonstrated, our method appears to work well for non-convex deep learning problems in practice. A similar theoretical analysis in the non-convex setting can not be done as last iterate rates are provably impossible in that setting \citep{pmlr-v119-drori20a}.

Gradient sequences that slowly decrease or slowly increase after stabilizing are often observed in practice (Figure~\ref{fig:real-schedules}), and the resulting schedules that our framework produces show interesting behavior, resembling \textit{polynomial-decay} schedules with exponents $p$: 
\begin{equation}
\eta_{t} \propto \left(1-\frac{t}{T}\right)^p. \label{eq:polysched}
\end{equation}
Schedules of this form are already in use, with $p$ typically tuned as a hyper-parameter. 
From Figure~\ref{fig:example-schedules}, we see that $p<1$ should be used when the gradient norm is decreasing over time, $p>1$ when the gradient norm is increasing, and $p=1$ when the gradient sequence is flat. We use the term \emph{polynomial decay} for the schedule given in Equation~\ref{eq:polysched}, which its name in both PyTorch and Tensorflow, although in the literature schedules of the form $1/t^\alpha$ are also sometimes referred to as polynomial schedules.

\section{Related Work}
The standard schedule for strongly-convex stochastic optimization is $\eta_t \propto 1/t$. This is optimal when appropriate averaging is used \citep{shamir-zhang, lacostejulien2012simpler, rakhlin}. Without strongly convexity, theory typically recommends a constant schedule.

Learning rate schedules used in applications usually fall within a few standard classes supported by major frameworks. To survey the use of schedulers in open source software, we performed a GitHub search on each of the schedulers in the PyTorch Library. Table~\ref{table:sched-popularity} (Appendix) shows that the polynomial decay schedule, which includes the linear decay schedule as a special case, is currently the least popular scheduler in PyTorch. Step-wise schedules are by far the most popular.

Our approach is motivated by the remarkable work of \citet{eigencurve}, who show that for the last iterate of quadratic problems, minimizing an upper bound can yield improved schedules. However their work requires knowledge of the full Eigenspectrum of the Hessian, making it impractical to use. Our theory relies the sequence of expected gradient norms, a more tractable quantity.

Ours is not the first reduction from regret bounds to a last-iterate guarantee: \cite{cutkosky2019anytime} suggests evaluating gradients at running averages to achieve the same convergence rate as Theorem~\ref{thm:additive}. However, our method is directly interpretable in terms of learning rate schedules to yield immediate practical guidance while the previous technique is a more opaque transformation of the base algorithm. Nevertheless, understanding the differences is a valuable future direction.

There has been significant work on learning rate adaptation using gradient or loss information. The AdaGradNorm-type schedule is backed by strong theory \citep{wu2020wngrad, lessregret, adagrad, ward2019adagrad, li2019convergence}. They suggest $
\eta_{t}\propto 1/ \sqrt{\sum_{t=1}^{T}\left\Vert g_{t}\right\Vert ^{2}}$, but it is unclear how to apply these schedules to other optimizers such as Adam.

Methods based on hyper-gradients (i.e.\ gradients of the learning rate) are another well-studied class of adaptive method. They use recent gradient evaluations and other local information to update the learning rate \citep{almeida99,franceschi17a, bengio-hyper, domkehyper, pedregosa-hyper, baydin2018hypergradient, fhutter-hyper, donini2020, ultimategd}. Alternatively, hyper-gradients can be used to signal when to decrease the learning rate \citep{pflug83, pflug88, lang2019, zhang2020statistical}. 
 Methods based on loss minimization choose a learning rate that greedily minimizes the test or train loss \citep{xu2019learning, autolr2021, kim2021automated}.


\bibliography{scheduling}
\bibliographystyle{apalike}

\appendix

\newpage
\section{Limitations}
\label{sec:limitations}
There are certain problems like CIFAR-10 and CIFAR-100 where we can clearly observe potential pitfalls of our approach (Figure~\ref{fig:cifar-schedules}) when our assumptions break down. On these problems $100\%$ train accuracy is reached near the end, driving the gradient norm sequence to zero. The resulting refined schedule rapidly increases the step size near the end of optimization, which if used results in training divergence. We suggest using a linear decay schedule instead on problems where the gradient norm drops significantly at the end.

Our work assumes that multiple training runs for the given workload are possible. This often holds in practice, particularly in industry settings where the same models are repeatedly trained on fresh data. Even in cases where large one-off training runs are performed, refined schedules can be extrapolated from smaller trial runs, or similar past runs.

Although our theory is limited to the convex setting, it seems to have predictive power for non-convex problems also. The goal of our theory is to guide practice, not to provide an explanation of potential pathological worst case behavior that can occur in non-convex settings. For this purpose, simplified settings are ideal as they allow for precise and predictive results.

\begin{figure}[t]
\center
\includegraphics[width=\textwidth]{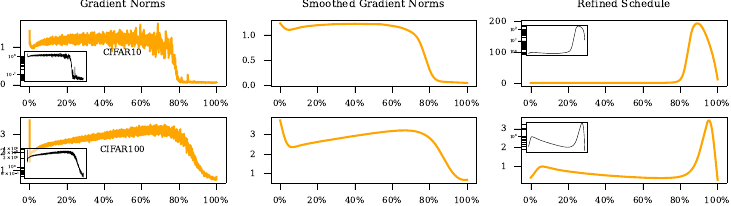}
\caption{\label{fig:cifar-schedules} Limitations of refinement: if model over-fits, our method produces degenerate schedules.}
\end{figure}
\section{All-Tail Summation Bound}\label{sec:proof:tail}
In this section we develop some key Lemmas that underlie our results. The starting point of this development is the following result from \cite{Orabona_2020, lin2016}. We present the result and proof below for completeness, and then provide an improved version of the bound (Lemma~\ref{lem:tail}) that is used to derive our analytical results.
\subsection{Existing bound}
\begin{lemma}\label{lem:tail_initial}
Let $q_{t}\geq0$, and let $\eta_{t}$ be a positive non-increasing
sequence. Then:
\[
\eta_{T}q_{T}\leq\frac{1}{T}\sum_{t=1}^{T}\eta_{t}q_{t}+\sum_{k=1}^{T-1}\frac{1}{k(k+1)}\sum_{t=k+1}^{T}\eta_{t}\left(q_{t}-q_{k}\right),
\]
\end{lemma}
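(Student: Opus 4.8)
The plan is to prove the bound by a telescoping/summation-by-parts argument that builds $\eta_T q_T$ out of weighted tail averages. The key identity to establish first is a ``one-step peeling'' recursion: for any $1 \le k \le T-1$, writing $S_k \triangleq \sum_{t=k}^{T}\eta_t q_t$ and noting the combinatorial identity $\frac{1}{k} - \frac{1}{k+1} = \frac{1}{k(k+1)}$, one can express
\[
\frac{1}{k}\sum_{t=k}^{T}\eta_t q_t = \frac{1}{k+1}\sum_{t=k+1}^{T}\eta_t q_t + \eta_k q_k\cdot\frac{1}{k} - \frac{1}{k(k+1)}\sum_{t=k+1}^{T}\eta_t q_t,
\]
which after rearranging becomes a statement comparing the average of the tail starting at $k$ with the average of the tail starting at $k+1$. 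Iterating this from $k=1$ up to $k=T-1$ telescopes the tail-average terms, leaving $\eta_T q_T$ (the tail starting at $T$, which is a singleton average) on one side and $\frac{1}{T}\sum_{t=1}^T \eta_t q_t$ plus a sum of correction terms of the form $\frac{1}{k(k+1)}\sum_{t=k+1}^T \eta_t(q_t - q_k)$ on the other.

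Concretely, I would define $A_k \triangleq \frac{1}{T-k+1}\sum_{t=k}^{T}\eta_t q_t$ — wait, the cleaner bookkeeping actually uses $B_k \triangleq \frac{1}{k}\sum_{t=T-k+1}^{T}\eta_t q_t$ or simply works directly with partial sums indexed so that the weights $\frac{1}{k(k+1)}$ appear naturally; I would set it up so that the recursion reads $B_{k} \le B_{k+1} + (\text{correction})$ and telescope. The non-increasing hypothesis on $\eta_t$ enters precisely here: to pass from an identity to the stated inequality, one needs $\eta_k q_k \le \eta_t q_t$-type comparisons to fail gracefully — actually more carefully, one uses that $\eta_T q_T \le \frac{1}{k}\sum_{t=T-k+1}^T \eta_t q_t$ is false in general, so the monotonicity is used to control a boundary term where $\eta_T \le \eta_t$ for $t \le T$, ensuring the "last average dominates the last term" step goes through. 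I would track exactly which inequality direction requires $\eta_t$ non-increasing and invoke it there.

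The main obstacle I anticipate is getting the index bookkeeping exactly right so that the telescoping collapses to precisely $\frac{1}{T}\sum_{t=1}^T \eta_t q_t$ (and not, say, $\frac{1}{T}\sum_{t=1}^{T-1}$ or with an off-by-one in the outer sum's range $k=1,\dots,T-1$), and ensuring the correction terms come out with the $(q_t - q_k)$ structure rather than $(q_t - q_{k+1})$ or similar. Since this lemma is quoted from \cite{Orabona_2020, lin2016} and reproduced "for completeness," I would follow their argument closely; the substantive new work of the paper is in the refined Lemma~\ref{lem:tail}, where the inequality is replaced by an equality using a general weight sequence $\w_t$ in place of the specific $\frac{1}{k(k+1)}$ coefficients, so here I would aim for a clean, short exposition rather than anything novel.
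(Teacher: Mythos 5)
Your plan follows the same route as the paper's proof (suffix averages $S_k=\frac{1}{k}\sum_{t=T-k+1}^{T}\eta_t q_t$, a one-step recursion, unrolling from $S_1=\eta_T q_T$ down to $S_T=\frac{1}{T}\sum_{t=1}^{T}\eta_t q_t$), but as written it has two concrete gaps. First, your displayed ``peeling'' identity is false: comparing coefficients of $\sum_{t=k+1}^{T}\eta_t q_t$ on the right gives $\frac{1}{k+1}-\frac{1}{k(k+1)}=\frac{k-1}{k(k+1)}$, not $\frac{1}{k}$; the correction term must enter with a plus sign (since $\frac{1}{k}=\frac{1}{k+1}+\frac{1}{k(k+1)}$), and the prefactor $\frac{1}{k}$ only makes sense for a suffix of length $k$, i.e.\ your $B_k$ (the paper's $S_k$), not for $\sum_{t=k}^{T}$. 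Second, and more substantively, you never isolate the one place where an inequality --- and hence the hypotheses --- actually enters, and the explanation you venture (``the last average dominates the last term'') is not the mechanism. The recursion $S_k=S_{k+1}+\frac{1}{k(k+1)}\sum_{t=T-k}^{T}\left(\eta_t q_t-\eta_{T-k}q_{T-k}\right)$ and its unrolling are exact identities; the sole inequality in the whole proof is the termwise bound $\eta_t q_t-\eta_{T-k}q_{T-k}\le\eta_t\left(q_t-q_{T-k}\right)$ for $t\ge T-k$, which requires both $\eta_{T-k}\ge\eta_t$ (non-increasing steps) and $q_{T-k}\ge 0$. The hypothesis $q_t\ge 0$ never appears in your sketch, yet it is exactly what this step needs, so the core of the lemma is left unproved.

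The index bookkeeping you flag as the main obstacle is indeed where care is required, so be precise about it: the unrolled identity pairs the weight $\frac{1}{k(k+1)}$ with the suffix anchored at $q_{T-k}$, namely $\sum_{t=T-k+1}^{T}\eta_t\left(q_t-q_{T-k}\right)$, and after re-indexing $k\mapsto T-k$ this reads $\sum_{k=1}^{T-1}\frac{1}{(T-k)(T-k+1)}\sum_{t=k+1}^{T}\eta_t\left(q_t-q_k\right)$. Pairing $\frac{1}{k(k+1)}$ directly with $\sum_{t=k+1}^{T}\eta_t\left(q_t-q_k\right)$, as in the display you are aiming at, is not what the telescoping produces; a quick sanity check with $T=3$, constant $\eta_t$, and $q=(1,0,1)$ shows that pairing fails, so your final statement must carry the anchored-at-$q_{T-k}$ (equivalently $\frac{1}{(T-k)(T-k+1)}$) form that the argument actually delivers.
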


\begin{proof}
We present below a version of the proof that's written to move the
single inequality application in the whole proof to the last step.
Define $S_{k}=\frac{1}{k}\sum_{t=T-k+1}^{T}\eta_{t}q_{t}$.

Then
\begin{align*}
kS_{k} & =(k+1)S_{k+1}-\eta_{T-k}q_{T-k}\\
 & =kS_{k+1}+S_{k+1}-\eta_{T-k}q_{T-k}\\
 & =kS_{k+1}+\frac{1}{k+1}\sum_{t=T-k}^{T}\left(\eta_{t}q_{t}-\eta_{T-k}q_{T-k}\right).
\end{align*}

Dividing through by $k$:
\[
S_{k}=S_{k+1}+\frac{1}{k(k+1)}\sum_{t=T-k}^{T}\left(\eta_{t}q_{t}-\eta_{T-k}q_{T-k}\right).
\]

Unrolling, we have:
\[
S_{1}=S_{T}+\sum_{k=1}^{T-1}\frac{1}{k(k+1)}\sum_{t=T-k}^{T}\left(\eta_{t}q_{t}-\eta_{T-k}q_{T-k}\right).
\]

Now we use $S_{1}=\eta_{T}q_{T}$. Note that the first entry in that
sum is zero so we may shift the indexing to start at $t=T-k+1$. Giving:
\begin{align*}
\eta_{T}q_{T} & =S_{T}+\sum_{k=1}^{T-1}\frac{1}{k(k+1)}\sum_{t=T-k+1}^{T}\left(\eta_{t}q_{t}-\eta_{T-k}q_{T-k}\right)\\
 & \leq S_{T}+\sum_{k=1}^{T-1}\frac{1}{k(k+1)}\sum_{t=T-k+1}^{T}\eta_{t}\left(q_{t}-q_{T-k}\right).
\end{align*}

Where the final step uses the fact that $\eta_{t}$ is decreasing.
Plugging in the definition of $S_{T}$, and substituting $k=T-k$ to simplify gives the result.
\end{proof}

\subsection{Improved expression}
\begin{lemma}\label{lem:tail}
Let $q_{t}$ be any sequence, and let $\w_t$ be a positive sequence.
Then:
\begin{align*}
q_{T}&=\frac{1}{\sum_{t=1}^{T}\w_{t}}\sum_{t=1}^{T}\w_{t}q_{t}+\sum_{k=1}^{T-1}\frac{\w_{k}}{\sum_{t=k+1}^{T}\w_{t}}\left(\frac{1}{\sum_{t=k}^{T}\w_{t}}\sum_{t=k}^{T}\w_{t}\left(q_{t}-q_{k}\right)\right)\\
&=\frac{1}{\w_{1:T}}\sum_{t=1}^{T}\w_{t}q_{t}+\sum_{k=1}^{T-1}\left(\frac{1}{\w_{k+1:T}} - \frac{1}{\w_{k:T}}\right)\sum_{t=k}^{T}\w_{t}\left(q_{t}-q_{k}\right).
\end{align*}
\end{lemma}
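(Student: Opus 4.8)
The plan is to reproduce the telescoping argument behind Lemma~\ref{lem:tail_initial}, but carry every step as an exact equality rather than an inequality. The right object to track is the \emph{weighted tail average}
\[
A_j \triangleq \frac{1}{\w_{j:T}}\sum_{t=j}^{T}\w_t q_t,\qquad j=1,\dots,T,
\]
which is well defined because $\w$ is positive, so $\w_{j:T}>0$. Observe that $A_T = q_T$ is exactly the left-hand side of the claim, while $A_1 = \frac{1}{\w_{1:T}}\sum_{t=1}^{T}\w_t q_t$ is the first term on the right-hand side. Hence it suffices to evaluate $A_T - A_1 = \sum_{j=1}^{T-1}(A_{j+1}-A_j)$ by telescoping.

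First I would derive the one-step recursion. Peeling off the $j$-th term gives $\w_{j:T}A_j = \w_j q_j + \w_{j+1:T}A_{j+1}$, and since $\w_{j:T} = \w_j + \w_{j+1:T}$ this rearranges to
\[
\w_{j+1:T}(A_{j+1}-A_j) = \w_j(A_j - q_j).
\]
Next I would rewrite the factor $A_j - q_j$ in the form appearing in the statement: from $\sum_{t=j}^{T}\w_t(q_t - q_j) = \w_{j:T}A_j - q_j\,\w_{j:T} = \w_{j:T}(A_j - q_j)$ we obtain
\[
A_{j+1}-A_j = \frac{\w_j}{\w_{j+1:T}}\cdot\frac{1}{\w_{j:T}}\sum_{t=j}^{T}\w_t(q_t - q_j).
\]
Summing this over $j=1,\dots,T-1$, using $A_T=q_T$, and renaming $j$ to $k$ yields precisely the first displayed identity of the lemma. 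The second form then follows from the elementary identity
\[
\frac{\w_k}{\w_{k:T}\,\w_{k+1:T}} = \frac{1}{\w_{k+1:T}} - \frac{1}{\w_{k:T}},
\]
which holds because $\w_{k:T} - \w_{k+1:T} = \w_k$; here $\w_{k+1:T}\ge \w_T>0$ for every $k\le T-1$, so no denominator vanishes over the range of summation.

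I do not expect a genuine obstacle: the only points that need care are (i) parametrizing the tail average by its starting index $j$ rather than by its length (as is done in the proof of Lemma~\ref{lem:tail_initial}), which keeps the bookkeeping aligned with the indexing in the statement, and (ii) checking that $\w_{j:T}$ and $\w_{j+1:T}$ are nonzero throughout, which is exactly why positivity of $\w$ is assumed. Since every manipulation is an equality, Lemma~\ref{lem:tail_initial} is recovered as a special case by specializing $\w_t=\eta_t$ and bounding the $q_k$-dependent terms, but that is not needed here.
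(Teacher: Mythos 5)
Your proof is correct and is essentially the paper's own argument: the paper telescopes the same weighted tail averages, only parametrized by tail length (with a final change of variables $k\to T-k$) rather than by starting index as you do. The reindexing is cosmetic; the recursion, the rewriting of $A_j-q_j$ as $\frac{1}{\w_{j:T}}\sum_{t=j}^T \w_t(q_t-q_j)$, and the telescoping are identical.
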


\begin{proof}
Define
\[
S_{k}=\frac{1}{\sum_{t=T-k+1}^{T}\w_{t}}\sum_{t=T-k+1}^{T}\w_{t}q_{t}.
\]
Note that with this definition:
\[
S_{1}=\frac{1}{\sum_{t=T}^{T}\w_{t}}\sum_{t=T}^{T}\w_{t}q_{t}=q_{T},
\]
and $S_{T}$ is the full sum:
\[
S_{T}=\frac{1}{\sum_{t=1}^{T}\w_{t}}\sum_{t=1}^{T}\w_{t}q_{t}.
\]
The difference from the weighting used in the the Lemma above: we
normalized by the sum of the step sizes rather than $k$. We get the
following expansion: 

\begin{align*}
\left(\sum_{t=T-k+1}^{T}\w_{t}\right)S_{k} & =\sum_{t=T-k+1}^{T}\w_{t}q_{t}\\
 & =\sum_{t=T-k}^{T}\w_{t}q_{t}-\w_{T-k}q_{T-k}\\
 & =\left(\sum_{t=T-k}^{T}\w_{t}\right)S_{k+1}-\w_{T-k}q_{T-k}\\
 & =\left(\sum_{t=T-k+1}^{T}\w_{t}\right)S_{k+1}+\left(\w_{T-k}S_{k+1}-\w_{T-k}q_{T-k}\right).
\end{align*}
So dividing through by $\sum_{t=T-k+1}^{T}\w_{t}$:
\[
S_{k}=S_{k+1}+\frac{\w_{T-k}}{\sum_{t=T-k+1}^{T}\w_{t}}\left(S_{k+1}-q_{T-k}\right).
\]
Unrolling
\[
S_{1}=S_{T}+\sum_{k=1}^{T-1}\frac{\w_{T-k}}{\sum_{t=T-k+1}^{T}\w_{t}}\left(S_{k+1}-q_{T-k}\right).
\]
Note that, plugging in $S_{k+1}$:

\begin{align*}
S_{k+1}-q_{T-k} & =\frac{1}{\sum_{t=T-k}^{T}\w_{t}}\sum_{t=T-k}^{T}\w_{t}q_{t}-q_{T-k}\\
 & =\frac{1}{\sum_{t=T-k}^{T}\w_{t}}\sum_{t=T-k}^{T}\w_{t}\left(q_{t}-q_{T-k}\right).
\end{align*}
So we have:
\[
q_{T}=\frac{1}{\sum_{t=1}^{T}\w_{t}}\sum_{t=1}^{T}\w_{t}q_{t}+\sum_{k=1}^{T-1}\frac{\w_{T-k}}{\sum_{t=T-k+1}^{T}\w_{t}}\left(\frac{1}{\sum_{t=T-k}^{T}\w_{t}}\sum_{t=T-k}^{T}\w_{t}\left(q_{t}-q_{T-k}\right)\right).
\]
Finally, we make the simple change of variables $k\to T-k$ to yield the result.
\end{proof}
\begin{remark}
We will typically use this result by setting $q_t = f(x_t) - f_{*}$ in order to bound $q_{T}=f(x_{T})-f_{*}$.
By using a weighting sequence $\w_{t}$ elsewhere, we are able to
remove the $\w_{T}$ weight from in front of the $q_{T}$ term (in contrast to Lemma~\ref{lem:tail_initial}).
This is crucial, as we want to be able to analyze the situation in which $\w_t$ drops extremely small near the end,
and yet still bound $q_{T}=f(x_{T})-f_{*}$, but if $q_{T}$ is weighted
by $\w_{T}$ we will get very loose bounds when $\w_{T}$ is small.
Notice also that we have an equality instead of an inequality, and
we have not had to impose the requirement that $\w$ be a non-increasing
sequence.
\end{remark}

\section{Proof of Theorem ~\ref{thm:additive}}\label{sec:proof:thmadditive}

Before proving Theorem~\ref{thm:additive}, we need the following important Lemma:

\begin{restatable}{lemma}{lemrearrange}\label{lem:rearrange}
    Suppose $z_1,\dots,z_T$ is an arbitrary sequence of vectors. Let $\w_1,\dots,\w_T$ be an arbitrary sequence of positive numbers. Define the sequence  $x_1,\dots,x_T$ recursively by $x_1=z_1$ and:
\begin{align*}
    x_t = \w_{t:T}\left( \frac{z_t}{\w_{1:T}} + \sum_{p=1}^{t-1} x_p\left(\frac{1}{\w_{p+1:T}}-\frac{1}{\w_{p:T}}\right)\right).
\end{align*}
Suppose $g_t$ are random variables with $\E[g_t|x_1,\dots,x_t] \in \partial f(x_t)$ for some convex $f$. Then:
\begin{align*}
\E[f(x_T) - f(\cmp)] &\le \E\left[\frac{1}{\w_{1:T}}\sum_{t=1}^T\w_t\langle g_t, z_t - \cmp\rangle \right].
\end{align*}
\end{restatable}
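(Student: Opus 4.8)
The plan is to apply the ``all-tail summation'' identity of Lemma~\ref{lem:tail} with $q_t = f(x_t) - f(u)$ and then bound each term using convexity. First I would verify that the recursive definition of $x_t$ in the statement of Lemma~\ref{lem:rearrange} is equivalent to the update $x_{t+1} = x_t + \frac{\w_{t+1:T}}{\w_{1:T}}\Delta_t$ from Theorem~\ref{thm:additive} (with $\Delta_t = z_{t+1}-z_t$); this is a bookkeeping step — solve for $x_t$ in terms of $z_1,\dots,z_t$ by telescoping the increments $x_{p+1}-x_p = \frac{\w_{p+1:T}}{\w_{1:T}}(z_{p+1}-z_p)$ and use Abel summation to rewrite the result in the ``accumulator'' form stated in the Lemma. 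Once that equivalence is in hand, the key structural fact is that the coefficient $\frac{1}{\w_{k+1:T}} - \frac{1}{\w_{k:T}}$ appearing in Lemma~\ref{lem:tail} is exactly the weight attached to $x_k$ in the defining recursion, which is what makes the cross-terms collapse.

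Next I would plug $q_t = f(x_t)-f(u)$ into Lemma~\ref{lem:tail}, obtaining
\begin{align*}
f(x_T) - f(u) = \frac{1}{\w_{1:T}}\sum_{t=1}^T \w_t\bigl(f(x_t)-f(u)\bigr) + \sum_{k=1}^{T-1}\left(\frac{1}{\w_{k+1:T}}-\frac{1}{\w_{k:T}}\right)\sum_{t=k}^T \w_t\bigl(f(x_t)-f(x_k)\bigr).
\end{align*}
The plan is then to take conditional expectations and apply the subgradient inequality $f(u) - f(x_t) \ge \langle \E[g_t \mid x_1,\dots,x_t], u - x_t\rangle$ and $f(x_t) - f(x_k) \le \langle \E[g_t\mid \cdot], x_t - x_k\rangle$ termwise. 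Using the independence of $\w_t$ and $g_t$ given $x_1,\dots,x_t$, we may replace $\w_t g_t$ expectations cleanly. After this substitution every occurrence of $f$ disappears and we are left with an expression that is \emph{linear} in the $x_t$'s and in $u$; the goal is to show this linear expression equals $\frac{1}{\w_{1:T}}\sum_t \w_t \langle g_t, z_t - u\rangle$.

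The main obstacle — and the heart of the argument — is the algebraic identity showing that
\begin{align*}
\frac{1}{\w_{1:T}}\sum_{t=1}^T \w_t\langle g_t, x_t - u\rangle + \sum_{k=1}^{T-1}\left(\frac{1}{\w_{k+1:T}}-\frac{1}{\w_{k:T}}\right)\sum_{t=k}^T \w_t\langle g_t, x_t - x_k\rangle = \frac{1}{\w_{1:T}}\sum_{t=1}^T \w_t\langle g_t, z_t - u\rangle.
\end{align*}
Equivalently, after cancelling the $-u$ contributions (whose coefficients sum correctly by a partial-fraction telescoping $\frac{1}{\w_{1:T}} + \sum_{k=1}^{T-1}(\frac{1}{\w_{k+1:T}}-\frac{1}{\w_{k:T}})\w_{k:T}\cdot\frac{1}{\w_{1:T}}$-type check), one must show $\sum_t \w_t\langle g_t, z_t\rangle$-weighted combination matches. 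The cleanest route is to swap the order of summation on the double sum: fix $t$ and collect the coefficient multiplying $\w_t\langle g_t, x_t\rangle$ versus $\sum_{k\le t}(\cdots)\w_t\langle g_t, x_k\rangle$; the coefficient of $x_t$ telescopes to $\frac{1}{\w_{t:T}}$ minus corrections, and the $x_k$ pieces reassemble, via the very recursion defining $x_t$, into the $z_t$ term. I expect this reindexing/telescoping to be the delicate part, since one must track the $\frac{1}{\w_{k:T}}$ versus $\frac{1}{\w_{k+1:T}}$ boundary terms carefully; it is precisely engineered so that the accumulator form of $x_t$ given in the Lemma statement is the fixed point of the resulting recurrence. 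I would finish by noting that the whole manipulation is an equality at every step (inequalities enter only through convexity of $f$), and that positivity of $\w_t$ is used only to ensure all denominators $\w_{k:T}$ are nonzero.
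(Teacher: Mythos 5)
Your proposal follows essentially the same route as the paper's proof: apply Lemma~\ref{lem:tail} with $q_t = f(x_t)-f(u)$, bound termwise by convexity, and swap the order of summation to collect the coefficients of $\langle g_t, x_t\rangle$, $\langle g_t, x_k\rangle$ and $\langle g_t, u\rangle$, whereupon the defining accumulator recursion for $x_t$ reassembles everything into $\frac{1}{\w_{1:T}}\sum_{t=1}^T \w_t\langle g_t, z_t-u\rangle$. One small correction: positivity of the $\w_t$ is needed not only to keep the denominators $\w_{k:T}$ nonzero but also to make the coefficients $\frac{1}{\w_{k+1:T}}-\frac{1}{\w_{k:T}}$ (and $\w_t/\w_{1:T}$) nonnegative, which is what lets the termwise subgradient inequalities preserve the overall inequality direction.
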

\begin{proof}
Let us set $q_t = f(x_t) - f(\cmp)$. Then we have $\E[q_t] \le \E[\langle g_t , x_t -\cmp\rangle]$ and $\E[q_t -q_k]\le \E[\langle g_t, x_t - x_k\rangle]$. Then, Lemma~\ref{lem:tail} implies:
\begin{align*}
    \E[q_T]&\le \E\left[\frac{1}{\w_{1:T}} \sum_{t=1}^T \w_t \langle g_t, x_t - \cmp\rangle + \sum_{k=1}^{T-1}\left(\frac{1}{\w_{k+1:T}} - \frac{1}{\w_{k:T}}\right) \sum_{t=k}^T \w_t \langle g_t, x_t - x_{k}\rangle \right].
\end{align*}
Now, let us find the coefficient of $\langle g_t, x_t\rangle$ in the above expression. This is:
\begin{align*}
    &\frac{\w_t}{\w_{1:T}} +\left[ \sum_{k=1}^{t} \left(\frac{1}{\w_{k+1:T}} - \frac{1}{\w_{k:T}}\right) \w_t \right]-\w_t\left(\frac{1}{\w_{t+1:T}} - \frac{1}{\w_{t:T}}\right)\\
    &=\frac{\w_t}{\w_{1:T}} + \sum_{k=1}^{t-1} \left(\frac{1}{\w_{k+1:T}} - \frac{1}{\w_{k:T}}\right) \w_t\\
    &=\frac{\w_t}{\w_{t:T}}.
\end{align*}
Next, for $p<t$, the coefficient of $\langle g_t, x_p\rangle$ is:
\begin{align*}
    -\left(\frac{1}{\w_{p+1:T}}-\frac{1}{\w_{p:T}}\right)\w_t.
\end{align*}
And for $p>t$, the coefficient of $\langle g_t, x_p\rangle$ is zero. Finally the coefficient of $\langle g_t, \cmp\rangle$ is $-\frac{\w_t}{\w_{1:T}}$.

Putting this all together, we can rearrange the expression as follows:
\begin{align*}
    \E[q_T]&\le \E\left[ \sum_{t=1}^T \left\langle g_t, \frac{\w_t}{\w_{t:T}}x_t -\left(\frac{\w_t \cmp}{\w_{1:T}}  +\sum_{p=1}^{t-1} \w_t x_t \left(\frac{1}{\w_{p+1:T}}-\frac{1}{\w_{p:T}}\right) \right)\right\rangle \right]\\
    &= \E\left[ \sum_{t=1}^T\left\langle \frac{\w_t}{\w_{t:T}} g_t, x_t - \w_{t:T}\left( \frac{\cmp}{\w_{1:T}} + \sum_{p=1}^{t-1} x_p\left(\frac{1}{\w_{p+1:T}}-\frac{1}{\w_{p:T}}\right)\right) \right\rangle\right].
\end{align*}

Now, given an arbitrary sequence $z_1,\dots,z_T$, define $x_t$ recursively by:
\begin{align*}
    x_t = \w_{t:T}\left( \frac{z_t}{\w_{1:T}} + \sum_{p=1}^{t-1} x_p\left(\frac{1}{\w_{p+1:T}}-\frac{1}{\w_{p:T}}\right)\right).
\end{align*}
Then we have:
\begin{align*}
    \E[q_T]&\le \E\left[ \sum_{t=1}^T\left\langle \frac{\w_t}{\w_{t:T}} g_t, x_t - \w_{t:T}\left( \frac{\cmp}{\w_{1:T}} + \sum_{p=1}^{t-1} x_p\left(\frac{1}{\w_{p+1:T}}-\frac{1}{\w_{p:T}}\right)\right) \right\rangle\right]\\
    &=\E\left[\sum_{t=1}^T\frac{\w_t}{\w_{1:T}} \langle g_t, z_t - \cmp\rangle \right].
\end{align*}
\end{proof}

Now, we are finally ready to prove Theorem~\ref{thm:additive}:
\thmadditive*
\begin{proof}
Let's define $\hat x_1=z_t$ and recursively set:
\begin{align*}
    \hat x_t = w_{t:T} \left(\frac{z_t}{w_{1:T}} + \sum_{p=1}^{t-1} \hat x_p\left(\frac{1}{w_{p+1:T}} - \frac{1}{w_{p:T}}\right)\right).
\end{align*}
Then, Lemma~\ref{lem:rearrange} shows that $\E[f(\hat x_T) - f(\cmp)] \le \E\left[\frac{1}{\w_{1:T}}\sum_{t=1}^T\w_t\langle g_t, z_t - \cmp\rangle \right]$. So, it suffices to show that $x_t=\hat x_t$ for all $t$. In turn, since $\hat x_1 = z_1=x_1$, it suffices to show $\hat x_{t+1}-\hat x_t = \frac{w_{t+1:T}}{w_{1:T}} \Delta_t=x_{t+1}-x_t$ for all $t$.

To this end, let's do some calculation. First:
\begin{align*}
    \hat x_t &=   \frac{\w_t}{\w_{t:T}}\hat x_t + \frac{\w_{t+1:T}}{\w_{t:T}}\hat x_t\\
    &= \frac{\w_t}{\w_{t:T}} \hat x_t +\w_{t+t:T}\left( \frac{z_t}{\w_{1:T}} + \sum_{p=1}^{t-1} \hat x_p\left(\frac{1}{\w_{p+1:T}}-\frac{1}{\w_{p:T}}\right)\right).
\end{align*}
With this expression, we have:
\begin{align*}
\hat x_{t+1} - \hat x_t &= \hat x_{t+1} - \w_{t+1:T}\left( \frac{z_{t+1}}{\w_{1:T}} + \sum_{p=1}^{t} x_p\left(\frac{1}{\w_{p+1:T}}-\frac{1}{\w_{p:T}}\right)\right)  - \frac{\w_t}{\w_{t:T}} \hat x_t \\
&=w_{t+1:T} \left(\frac{z_{t+1}}{w_{1:T}} + \sum_{p=1}^{t} \hat x_p\left(\frac{1}{w_{p+1:T}} - \frac{1}{w_{p:T}}\right)\right)\\
&\qquad-  \w_{t+1:T}\left( \frac{z_t}{\w_{1:T}} + \sum_{p=1}^{t-1} x_p\left(\frac{1}{\w_{p+1:T}} -\frac{1}{\w_{p:T}}\right)\right) - \frac{\w_t}{\w_{t:T}} \hat x_t \\
&= \frac{\w_{t+1:T} (z_{t+1} -  z_t)}{\w_{1:T}} + \w_{t+1:T} \hat x_t\left(\frac{1}{\w_{t+1:T}} - \frac{1}{\w_{t:T}}\right) - \frac{\w_t}{\w_{t:T}} \hat x_t\\
&= \frac{\w_{t+1:T}}{\w_{1:T}} \Delta_t + \hat x_t\left(1-\frac{\w_{t+1:T} +\w_t}{\w_{t:T}}\right)\\
&= \frac{\w_{t+1:T}}{\w_{1:T}} \Delta_t.
\end{align*}
\end{proof}

\section{Proof of Theorem~\ref{thm:optimal}}\label{sec:proof:optimal}

\thmoptimal*

\begin{proof}
    First, observe that with $\Delta_t = -\w_t g_t$, we have $x_{t+1} = x_t + \frac{\w_{t+1:T}}{\w_{1:T}} \Delta_t$. Therefore with $z_1=x_1$ and $z_{t+1} = z_t +\Delta_t$, Theorem~\ref{thm:additive} implies:
    \begin{align*}
        \E[f(x_T) - f(\cmp)]&\le \E\left[\frac{1}{\w_{1:T}} \sum_{t=1}^T \langle \w_t g_t,z_t -\cmp\rangle\right].
    \end{align*}
    Next, observe that $z_t$ is simply online gradient descent with learning rate 1 acting on the loss vectors $\w_t z_t$. Standard analysis \citep{zinkevich2003online} shows:
    \begin{align*}
        \sum_{t=1}^T \langle \w_t g_t,z_t -\cmp\rangle& = \frac{\|z_1-\cmp\|^2}{2} - \frac{\|z_{T+1}-\cmp\|^2}{2} +\sum_{t=1}^T \frac{\w_t^2 \|g_t\|^2}{2}.
    \end{align*}
    This immediately implies the first part of the Theorem. Next, we need to solve for the minimizing values of $\w_t$. To do this, we take the logarithm of the expression $\frac{1}{2\w_{1:T}}\left(\|x_1-\cmp\|^2 + \sum_{t=1}^T \w_t^2\|g_t\|^2\right)$ and differentiate:
    \begin{align*}
        \frac{\partial}{\partial \w_k} \log\left[\frac{1}{2 \w_{1:T}}\left(\|x_1-\cmp\|^2 + \sum_{t=1}^T \w_t^2\|g_t\|^2\right)\right]&= \frac{2\w_k \|g_k\|^2}{\|x_1-\cmp\|^2 + \sum_{t=1}^T \w_t^2\|g_t\|^2} -\frac{1}{\w_{1:T}}.
    \end{align*}
    We set this equal to zero to solve for the optimal $\w_k$:
    \begin{align*}
    \w_k &= \|g_k\|^{-2}\frac{\|x_1-\cmp\|^2 + \sum_{t=1}^T \w_t^2\|g_t\|^2}{2 \w_{1:T}}\triangleq \lambda \|g_k\|^{-2},
    \end{align*}
    where we have defined $\lambda =\frac{\|x_1-\cmp\|^2 + \sum_{t=1}^T \w_t^2\|g_t\|^2}{2 \w_{1:T}} $, which does not depend on $k$. That is, the optimal $\w_k$ value is proportional to $\|g_k\|^{-2}$. With this expression, we have:
    \begin{align*}
        \sum_{t=1}^T \w_t^2\|g_t\|^2 &= \lambda^2 \sum_{t=1}^T \|g_t\|^{-2}\\
        \w_{1:T} &= \lambda \sum_{t=1}^T \|g_t\|^{-2}.
    \end{align*}
    So, let us now solve for $\lambda$ by plugging in these values:
    \begin{align*}
        \lambda &=\frac{\|x_1-\cmp\|^2 + \sum_{t=1}^T \w_t^2\|g_t\|^2}{2 \w_{1:T}} \\
        \lambda &= \frac{\|x_1-\cmp\|^2 + \lambda^2 \sum_{t=1}^T \|g_t\|^{-2}}{2\lambda \sum_{t=1}^T \|g_t\|^{-2}} \\
        \lambda &= \frac{\|x_1-\cmp\|^2}{2\lambda \sum_{t=1}^T \|g_t\|^{-2}} + \frac{\lambda}{2}\\
        \lambda &=\frac{\|x_1-\cmp\|}{\sqrt{\sum_{t=1}^T \|g_t\|^{-2}}}.
    \end{align*}
    This in turn implies the claimed optimal value for $\w_k$.
\end{proof}

\section{Schedules for Per-Coordinate Updates}\label{sec:percoordinate}

Many popular optimization algorithms in use today like Adam \citep{KingmaB14} and its variants employ \emph{per-coordinate} updates: the update $\Delta_t$ is not proportional to the gradient $g_t$ but instead scales each coordinate of $g_t$ by an adaptively chosen value. In this section we propose an approximately optimal schedule for such methods. 

\begin{restatable}{theorem}{thmpercoordinate}\label{thm:percoordinate}
Suppose $\Delta_t =z_{t+1}-z_t= -\w_t \cdot(\eta_t \odot g_t)$ where $\eta_t \in \R^d$ is a vector of learning rates and $\odot$ indicates coordinate-wise product. Set $x_{t+1} = x_t - \frac{\w_{t+1:T}}{\w_{1:T}} \Delta_t$. Define the quantity $R$ by:
\begin{align*}
    R= \sqrt{\sum_{t=1}^T \sum_{i=1}^d \frac{(z_{t,i}-{\cmp}_i)^2 - (z_{t+1,i}-{\cmp}_i)^2}{\eta_{t,i}}}
\end{align*}
Then we have:
\begin{align*}
    \E[f(x_T) - f(\cmp)]&\le \E\left[\frac{1}{\w_{1:T}} \left(\frac{R^2}{2} + \sum_{t=1}^T \w_t^2 \sum_{i=1}^d \eta_{t,i} g_{t,i}^2\right)\right].
\end{align*}
Moreover, for any given fixed values for $R$ and $g_{t,i}^2$, the expression $\frac{1}{\w_{1:T}} \left(\frac{R^2}{2}+ \sum_{t=1}^T \w_t^2 \sum_{i=1}^d \eta_{t,i}^2 g_{t,i}^2\right)$ is minimized by setting $w_t$ as below:
\begin{align*}
    \w_t &= \frac{R}{\sqrt{ \sum_{t=1}^T \left( \sum_{i=1}^d \eta_{t,i} g_{t,i}^2\right)^{-1}}} \cdot \left(\sum_{i=1}^d \eta_{t,i} g_{t,i}^2\right)^{-1}.
\end{align*}
\end{restatable}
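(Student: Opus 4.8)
The plan is to follow the same template as the proof of Theorem~\ref{thm:optimal}, generalizing each step to handle the per-coordinate learning rate vectors $\eta_t$. First I would invoke Theorem~\ref{thm:additive} with the weighted baseline sequence $z_t$: since $\Delta_t = z_{t+1}-z_t = -\w_t(\eta_t \odot g_t)$ and $x_{t+1} = x_t + \frac{\w_{t+1:T}}{\w_{1:T}}\Delta_t$ by hypothesis, the theorem gives
\begin{align*}
\E[f(x_T)-f(\cmp)] \le \E\left[\frac{1}{\w_{1:T}}\sum_{t=1}^T \langle \w_t g_t, z_t - \cmp\rangle\right].
\end{align*}
The next step is to bound the (weighted) regret term $\sum_t \langle \w_t g_t, z_t - \cmp\rangle$. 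Here the key observation, exactly as in the SGD case, is that $z_t$ is nothing but online \emph{mirror}/preconditioned gradient descent: $z_{t+1,i} = z_{t,i} - \eta_{t,i}(\w_t g_{t,i})$ coordinate-wise. Expanding $\|z_{t+1,i}-\cmp_i\|^2$ coordinate-by-coordinate (completing the square against the per-coordinate step $\eta_{t,i}$) yields
\begin{align*}
\w_t g_{t,i}(z_{t,i}-\cmp_i) = \frac{(z_{t,i}-\cmp_i)^2 - (z_{t+1,i}-\cmp_i)^2}{2\eta_{t,i}} + \frac{\w_t^2 \eta_{t,i} g_{t,i}^2}{2}.
\end{align*}
Summing over coordinates $i$ and steps $t$ and recognizing the first group of terms as $R^2/2$ by definition of $R$ gives exactly the claimed bound; this is the analogue of the Zinkevich-style telescoping identity used in Theorem~\ref{thm:optimal}, and the only subtlety is being careful that $R$ is defined as a sum (not a telescoping sum, since the $\eta_{t,i}$ vary with $t$), so no cancellation is assumed.

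For the second half — minimizing $\frac{1}{\w_{1:T}}\left(\frac{R^2}{2} + \sum_t \w_t^2 c_t\right)$ where I abbreviate $c_t = \sum_i \eta_{t,i} g_{t,i}^2$ — I would reuse the log-differentiation trick from the proof of Theorem~\ref{thm:optimal} verbatim. Taking $\frac{\partial}{\partial \w_k}\log[\cdot]$ and setting it to zero gives $\frac{2\w_k c_k}{R^2 + \sum_t \w_t^2 c_t} = \frac{1}{\w_{1:T}}$, so $\w_k = \lambda c_k^{-1}$ with $\lambda$ independent of $k$. Substituting back, $\sum_t \w_t^2 c_t = \lambda^2 \sum_t c_t^{-1}$ and $\w_{1:T} = \lambda \sum_t c_t^{-1}$, and solving the resulting scalar equation $\lambda = \frac{R^2 + \lambda^2\sum_t c_t^{-1}}{2\lambda \sum_t c_t^{-1}}$ yields $\lambda = \frac{R}{\sqrt{\sum_t c_t^{-1}}}$, which reproduces the stated $\w_t$. (One should note the cosmetic mismatch in the theorem statement between $\eta_{t,i}$ in the bound and $\eta_{t,i}^2$ in the minimization line; the intended quantity is $c_t = \sum_i \eta_{t,i} g_{t,i}^2$ as in the displayed bound, and I would just minimize that consistently.)

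I expect the main obstacle to be the regret step — specifically, being careful about exactly which "online gradient descent" guarantee applies. Unlike the plain SGD case where a black-box citation to \citet{zinkevich2003online} suffices, here the per-coordinate preconditioner $\eta_t$ is itself time-varying, so I cannot quote a standard fixed-step OGD regret bound; instead I have to redo the one-step descent inequality by hand coordinate-wise and sum, which is where the definition of $R$ comes from. The rest (applying Theorem~\ref{thm:additive}, and the Lagrangian/log-derivative optimization) is essentially a transcription of arguments already in the paper, with $\|g_t\|^2$ replaced by $c_t = \sum_i \eta_{t,i} g_{t,i}^2$.
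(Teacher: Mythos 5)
Your proposal is correct and follows essentially the same route as the paper's proof: apply Theorem~\ref{thm:additive}, expand the per-coordinate one-step identity to write the weighted regret as $\frac{R^2}{2} + \sum_{t=1}^T \frac{\w_t^2}{2}\sum_{i=1}^d \eta_{t,i} g_{t,i}^2$, and then repeat the log-derivative optimization from Theorem~\ref{thm:optimal} with $\|g_t\|^2$ replaced by $\sum_{i=1}^d \eta_{t,i} g_{t,i}^2$. Your reading of the $\eta_{t,i}^2$ in the minimization line as a cosmetic typo for $\eta_{t,i}$ also matches how the paper actually carries out that optimization.
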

As an example of how to use this result, let us suppose we are employing the Adam optimizer, and let us also also ignore the presence of momentum when computing the weights (so really, these will be the weights for the RMSProp optimizer). In this case, $\eta_{t,i} \propto \frac{1}{\sqrt{v_{t,i}}}$ where $v_{t,i}$ is the exponential average of the squared gradients. Thus, we get:
\begin{align*}
    w_t = \lambda \left(\sum_{i=1}^d \frac{g_{t,i}^2}{\sqrt{v_{t,i}}}\right)^{-1}
\end{align*}
for some $\lambda$. Then, we use a learning rate schedule of $\frac{w_t w_{t+1:T}}{w_{1:T}}$.

That is, the corresponding procedure to Algorithm~\ref{alg:refinement} is given by Algorithm~\ref{alg:refinement_adam}:

\begin{algorithm}[ht]
\begin{algorithmic}[1]
    \STATE {\bfseries Input:} $G=\left(G_t = \sum_{i=1}^d \frac{g_{t,i}^2}{\sqrt{v_{t,i}}}\right)$ length $T$ sequence of weighted gradient norms from Adam optimizer, smoothing hyper-parameter $\tau > 0$
    \STATE $\hat{G} = \textrm{median\_filter}(G, \textrm{filter\_width}=\tau\,T, \textrm{padding}=(\textrm{nearest}, \textrm{reflect}))$
    \STATE Define $w_{t}=\hat{G}_{t}^{-1}$
    \STATE For each $t$, let:
        \STATE \[
    \eta_{t} = w_{t} \sum_{p=t+1}^{T}w_{p}
    \]
    \STATE Return normalized schedule $\eta/\max(\eta)$
\end{algorithmic}
\caption{\label{alg:refinement_adam}Schedule Refinement for Adam}
\end{algorithm}

In practice, however, recording the value $\w_t\propto \left(\sum_{i=}^d \frac{g_{t,i}^2}{\sqrt{v_{t,i}}}\right)^{-1}$ may be difficult (for example, in Pytorch, the Adam implementation is primarily C code rather than Python, which makes it substantially more involved to modify). However, by inspecting the formula for $\w_t$, we can see that it is likely to be an interpolation between the $1/\|g_t\|_2^2$ and $1/\|g_t\|_1$ (the L1 norm is not squared). The intuition behind this is that if $v_{t,i}$ has a minimum value of $|g_{t,i}|$. With this minimum value, $w_t\propto 1/\|g_t\|_1$. On the other hand if all $v_{t,i}$ are the same constant, then $w_t\propto 1/\|g_t\|_2^2$. In practice we expect behavior closer to the first case and recommend using $\w_t\propto 1/\|g_t\|_1$.

\begin{proof}[proof of Theorem~\ref{thm:percoordinate}]
By standard online gradient descent analysis, we have:
\begin{align*}
     \langle \w_t \cdot g_t, z_t - \cmp\rangle & =  \sum_{i=1}^d \w_tg_{t,i}(z_{t,i} - {\cmp}_i)\\
     &= \sum_{i=1}^d \frac{(z_{t,i}-{\cmp}_i)^2}{2\eta_{t,i}} - \frac{(z_{t+1,i}-{\cmp}_i)^2}{2\eta_{t,i}} + \frac{\eta_{t,i}}{2} \w_t^2 g_{t,i}^2.
\end{align*}
Summing this over $t$ from $1$ to $T$, we get
\begin{align*}
     \sum_{t=1}^T \langle \w_t \cdot g_t, z_t - \cmp\rangle &= \sum_{t=1}^T \sum_{i=1}^d \frac{(z_{t,i}-{\cmp}_i)^2}{2\eta_{t,i}} - \frac{(z_{t+1,i}-{\cmp}_i)^2}{2\eta_{t,i}} + \sum_{t=1}^T \w_t^2\sum_{i=1}^d \frac{\eta_{t,i} g_{t,i}^2}{2}\\
     &=\frac{R^2}{2} + \sum_{t=1}^T \frac{\w_t^2}{2}\sum_{i=1}^d \eta_{t,i} g_{t,i}^2.
\end{align*}
The first part of the result now follows from Theorem~\ref{thm:additive}.

Next, we again take the logarithm, differentiate and set equal to zero:
\begin{align*}
    0&=\frac{\partial}{\partial \w_k} \log\left(\frac{1}{\w_{1:T}} \left(\frac{R^2}{2} + \sum_{t=1}^T \frac{\w_t^2}{2} \sum_{i=1}^d \eta_{t,i} g_{t,i}^2\right)\right)\\
    &= \frac{2\w_k \sum_{i=1}^d \eta_{k,i}^2 g_{k,i}^2}{R^2 + \sum_{t=1}^T \w_t^2 \sum_{i=1}^d \eta_{t,i} g_{t,i}^2} - \frac{1}{\w_{1:T}}.
\end{align*}
Rearranging, we obtain
\begin{align*}
    \w_k &= \frac{\left(\sum_{i=1}^d \eta_{k,i}^2 g_{k,i}^2\right)^{-1}\left(R^2 + \sum_{t=1}^T \w_t^2 \sum_{i=1}^d \eta_{t,i} g_{t,i}^2\right)}{2w_{1:T}}\\
    &\triangleq  \lambda \left(\sum_{i=1}^d \eta_{k,i} g_{k,i}^2\right)^{-1},
\end{align*}
where we have collected the non $k$-dependent terms into $\lambda=\frac{R^2 + \sum_{t=1}^T \w_t^2 \sum_{i=1}^d \eta_{t,i} g_{t,i}^2}{2w_{1:T}} $. Now we solve for $\lambda$:
\begin{align*}
    \lambda &= \frac{R^2 + \sum_{t=1}^T \w_t^2 \sum_{i=1}^d \eta_{t,i} g_{t,i}^2}{2w_{1:T}}\\
    &=\frac{R^2 + \lambda^2 \sum_{t=1}^T\left( \sum_{i=1}^d \eta_{t,i} g_{t,i}^2\right)^{-1}}{2\lambda \sum_{t=1}^T \left( \sum_{i=1}^d \eta_{t,i} g_{t,i}^2\right)^{-1}}\\
     &= \frac{R^2}{2\lambda \sum_{t=1}^T \left( \sum_{i=1}^d \eta_{t,i} g_{t,i}^2\right)^{-1}} + \frac{\lambda}{2}\\
     &= \frac{R}{\sqrt{ \sum_{t=1}^T \left( \sum_{i=1}^d \eta_{t,i} g_{t,i}^2\right)^{-1}}}.
\end{align*}
Putting $w_t=\lambda \left(\sum_{i=1}^d \eta_{t,i} g_{t,i}^2\right)^{-1}$ completes the proof.
\end{proof}

\section{Explicit Bounds for Arbitrary Schedules}
\label{sec:arbitary}

Section~\ref{sec:main-result} develops a framework that suggests using learning rates of the form $\eta_t =  \frac{\w_t \w_{t+1:T}}{\w_{1:T}}$ and in Theorem~\ref{thm:optimal} we provided a simple closed-form solution for the optimal values of $\w_t$. Given this apparently restricted form  of $\eta_t$, it is natural to ask if this restriction is real: that is, can \emph{every} schedule  $\eta_1,\dots,\eta_{T-1}$ be represented using some weights $\w_t$? Theorem~\ref{thm:represent} shows that the answer is ``yes'':

\begin{restatable}{theorem}{thmrepresent}\label{thm:represent}
Let $\eta_1,\dots,\eta_{T-1}$ be a sequence of non-negative numbers with $\eta_1\ge 0$. Then there is a sequence of non-negative weights $\w_1,\dots,\w_T$ such that $\eta_t =  \frac{\w_t \w_{t+1:T}}{\w_{1:T}}$ for all $t$.
\end{restatable}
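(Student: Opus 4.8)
The plan is to reconstruct the weights $\w_1,\dots,\w_T$ recursively, exploiting the fact that the target identity $\eta_t = \frac{\w_t\w_{t+1:T}}{\w_{1:T}}$ becomes, for each fixed $t$, a \emph{quadratic} equation once the ``suffix sum'' $s_t \triangleq \w_{t:T}$ and the constant $s_1 = \w_{1:T}$ are known. Concretely, since $\w_{t+1:T} = s_t - \w_t$, the identity reads $\w_t^2 - s_t \w_t + s_1\eta_t = 0$, so given $s_1$ and $s_t$ one solves for $\w_t$, and then $s_{t+1} = s_t - \w_t$ feeds the next step. Because the whole family of solutions scales (replacing $\w$ by $c\w$ turns $\eta_t$ into $c\eta_t$), I would first reduce to the normalized case $\max_t\eta_t = 1$ — this is a harmless rescaling. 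The single genuine degree of freedom is the choice of $s_1 = \w_{1:T}$ (equivalently the initial $\w_1$); everything else is then forced. The entire difficulty is to pick $s_1$ large enough that every quadratic along the way has a nonnegative real root and the resulting $\w_t$ stay nonnegative and consistent.

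The steps, in order. (i) Reduce to $\max_t \eta_t \le 1$ by the scaling remark. (ii) Fix $s_1 = 2^{2T}$ (a generously large value) and set $\w_1 = \frac{s_1 + \sqrt{s_1^2 - 4s_1\eta_1}}{2}$, the larger root of $\w_1^2 - s_1\w_1 + s_1\eta_1 = 0$; then define recursively $s_{t} = s_{t-1} - \w_{t-1}$ and $\w_t = \frac{s_t - \sqrt{s_t^2 - 4s_1\eta_t}}{2}$ for $t = 2,\dots,T-1$, and finally $\w_T = s_T = s_{T-1} - \w_{T-1}$. (iii) Prove by induction on $t \le T-1$ the lower bound $s_t \ge s_1/2^{t-1}$: the base case is immediate, and in the inductive step $s_{t+1} = s_t - \w_t = \frac{s_t + \sqrt{s_t^2 - 4s_1\eta_t}}{2} \ge s_t/2$. (iv) Use this bound together with $\eta_t \le 1$ to check the discriminants are positive: $s_t^2 \ge s_1^2/2^{2t-2} \ge s_1\cdot 2^{2T}/2^{2t-2} \ge 4s_1 \ge 4s_1\eta_t$ for $t \le T-1$, so each $\w_t$ is a nonnegative real; for the last term note $\w_{T-1} = \frac{s_{T-1} + \sqrt{\,\cdot\,}}{2} \le s_{T-1}$ wait — more carefully, $\w_{T-1}$ is the \emph{smaller} root so $\w_{T-1} \le s_{T-1}/2 \le s_{T-1}$, giving $\w_T = s_{T-1} - \w_{T-1} \ge 0$. (v) Verify $s_t = \w_{t:T}$ for all $t$ by a trivial downward induction ($s_T = \w_T$ by definition, and $s_{t-1} = s_t + \w_{t-1} = \w_{t:T} + \w_{t-1} = \w_{t-1:T}$). (vi) Conclude: since $\w_t^2 - s_t\w_t + s_1\eta_t = 0$ by construction and $s_t = \w_{t:T}$, $s_1 = \w_{1:T}$, we get $\eta_t = \frac{s_t\w_t - \w_t^2}{s_1} = \frac{\w_t(\w_{t:T} - \w_t)}{\w_{1:T}} = \frac{\w_t\w_{t+1:T}}{\w_{1:T}}$.

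The main obstacle — and the only place that requires any real thought — is choosing $s_1$ so that the induction in step (iii)–(iv) closes. One needs a value that dominates $4\eta_t$ even after it has been halved up to $T-1$ times, which is exactly why an exponentially large constant like $2^{2T}$ is forced; any polynomially large choice would fail for large $T$. Everything else (the reduction to $\max\eta_t = 1$, the algebraic identity linking the quadratic to the target expression, and the suffix-sum bookkeeping $s_t = \w_{t:T}$) is routine once that choice is in place. A minor subtlety worth stating explicitly is the root selection: one takes the \emph{larger} root for $\w_1$ but the \emph{smaller} root for $\w_2,\dots,\w_{T-1}$; the larger choice at $t=1$ is what makes $s_2 = s_1 - \w_1$ still sizeable is actually false — rather, one must check the sign conventions make $s_{t+1} \ge s_t/2$, which works precisely with the smaller-root choice for $t\ge 2$, and the $t=1$ case is handled separately since $s_2 = s_1 - \w_1 = \frac{s_1 - \sqrt{s_1^2 - 4s_1\eta_1}}{2} \ge$ something one can still bound below. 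I would double-check these sign choices carefully in the writeup, as they are the one spot where an off-by-one in ``larger vs. smaller root'' would break the argument.
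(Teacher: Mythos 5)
Your overall strategy is exactly the paper's: normalize so $\max_t \eta_t\le 1$, fix the huge total weight $s_1=\w_{1:T}=2^{2T}$, recursively solve the quadratic $\w_t^2-s_t\w_t+s_1\eta_t=0$ with $s_{t+1}=s_t-\w_t$, prove $s_t\ge s_1/2^{t-1}$ to keep every discriminant nonnegative, and finish by the downward induction $s_t=\w_{t:T}$. However, as written your construction has a genuine flaw at $t=1$, precisely the spot you flagged but did not resolve. If $\w_1$ is taken to be the \emph{larger} root $\frac{s_1+\sqrt{s_1^2-4s_1\eta_1}}{2}$, then $s_2=s_1-\w_1=\frac{s_1-\sqrt{s_1^2-4s_1\eta_1}}{2}\le \frac{4s_1\eta_1}{2\,(s_1+\sqrt{s_1^2-4s_1\eta_1})}\le 2\eta_1\le 2$, since $\sqrt{s_1^2-4s_1\eta_1}\ge s_1-4\eta_1$. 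So $s_2$ is \emph{not} ``sizeable'' and cannot be usefully bounded below: your induction hypothesis $s_t\ge s_1/2^{t-1}$ already fails at $t=2$, and worse, the next discriminant satisfies $s_2^2-4s_1\eta_2\le 4-2^{2T+2}\eta_2<0$ for any $\eta_2>2^{-2T}$, so $\w_2$ is not even real and the recursion breaks down. The hedge at the end of your writeup (``$s_2\ge$ something one can still bound below'') is therefore not salvageable as stated.

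The fix is small and keeps you on the paper's path: take the \emph{smaller} root at $t=1$ as well, $\w_1=\frac{s_1-\sqrt{s_1^2-4s_1\eta_1}}{2}$, which is nonnegative since $s_1\ge 4\eta_1$. Then $s_2=\frac{s_1+\sqrt{s_1^2-4s_1\eta_1}}{2}\ge s_1/2$, and the identity $s_{t+1}=\frac{s_t+\sqrt{s_t^2-4s_1\eta_t}}{2}\ge s_t/2$ holds uniformly for all $t\le T-1$, so the induction, the discriminant bounds, the nonnegativity of $\w_T=s_T=s_{T-1}-\w_{T-1}$, and the bookkeeping $s_t=\w_{t:T}$ all go through exactly as in your steps (iii)--(vi). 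For what it is worth, the paper's displayed formula for $\w_1$ also carries a ``$+$'' sign (and writes the discriminant as $2^{4T}-4\eta_1$ rather than $2^{4T}-4\cdot 2^{2T}\eta_1$), but its own induction step uses $s_{k+1}=\frac{s_k+\sqrt{s_k^2-4s_1\eta_k}}{2}$ starting from $k=1$, which is only consistent with the smaller-root choice at every index; so the smaller root throughout is the construction actually needed, and with that correction your proof coincides with the paper's.
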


\begin{proof}
First, observe that for any solution to the desired identity $\eta_t = \frac{\w_t \w_{t:1:T}}{\w_{1:T}}$,  replacing $w$ with $c\cdot  w$ for some constant $c$ will yield a solution for $\eta$ replaced with $c\cdot \eta$. Thus, it suffices to consider the possibility that $\max_t \eta_t =1$.

The intuition for the rest of the proof is the following: Given the value for $\w_1$, we can solve for $\w_{2:T}$ using the equation $\frac{\w_t \w_{2:T}}{\w_1 + \w_{2:T}} = \eta_1$. This in turn provides us with the value of $\w_{1:T}$. Now, given $\w_{k:T}$ for any $k$ we can solve for $\w_{k}$ using the equation $\eta_k = \frac{\w_k \w_{k+1:T}}{\w_{1:T}} = \frac{\w_k (\w_{k:T} - \w_k)}{\w_{1:T}}$. Thus we may recursively compute all  the values of $w$. Each of these steps requires solving a quadratic equation.  We simply choose an initial $\w_1$ so as to ensure that all the following quadratic equations have non-negative real roots.

Specifically, set $\w_1 = \frac{2^{2T} + \sqrt{2^{4T} - 4\eta_1}}{2}$ and define $s_1=2^{2T}$. Then, recursively define for $t=2,\dots,T-1$:
\begin{align*}
    s_{t} &= s_{t-1} - \w_{t-1}\\
    \w_t &= \frac{s_t - \sqrt{s_t^2 - 4 s_1 \eta_t}}{2}
\end{align*}
and set $\w_T=s_T= s_{T-1}-\w_{T-1}$. Notice that these choices satisfy:
\begin{align*}
    \w_t^2 - s_t\w_t +s_1 \eta_t=0
\end{align*}
so that if we could establish (1) that all $\w_t\ge 0$  and (2) $s_t = \w_{t:T}$, then we would have:
\begin{align*}
    \frac{\w_t \w_{t+1:T}}{\w_{1:T}} &= \frac{\w_t (\w_{t:T} - \w_t)}{\w_{1:T}}\\
    &=\frac{\w_t (s_t -\w_t)}{s_1}\\
    &=\frac{s_t \w_t-\w_t^2}{s_1}\\
    &=\eta_t
\end{align*}
as desired. 

Let us first prove (1): all $\w_t\ge 0$. For $t\le T-1$, this will  hold if $s_t^2 - 4s_1\eta_t>0$ . To establish this, we will first show that $s_t \ge \frac{s_1}{2^{t-1}}$ for all $t\le T-1$. If this holds, then we have:
\begin{align*}
    s_t^2 \ge \frac{s_1^2}{2^{2t-2}}\ge \frac{s_1 2^{2T}}{2^{2t-2}}\ge 4s_1\ge 4s_1\eta_t
\end{align*}
for $t\le T-1$, where we have used our assumption $\eta_t\le 1$.

So, we now establish $s_t \ge \frac{s_1}{2^{t-1}}$ by induction for $t\le T-1$. The statement is clear for $t=1$. Suppose it holds for for all $t\le k$ for some $k\le T-2$. Then we have:
\begin{align*}
    s_{k+1} &= s_k - \w_k\\
    &= \frac{s_k + \sqrt{s_k^2 - 4s_1\eta_k}}{2}\\
    &\ge \frac{s_k}{2}\\
    &\ge \frac{s_1}{2^{k-1+1}},
\end{align*}
which establishes the claim. Therefore for $t\le T-1$, $\w_t$ are non-negative real numbers. Finally, we have:
\begin{align*}
    \w_{T-1}  &= \frac{s_{T-1}+ \sqrt{s_{T-1}^2 - 4 s_1 \eta_t}}{2}\le s_{T-1},
\end{align*}
so that $\w_T=s_T = s_{T-1}-\w_{T-1}\ge 0$. Thus $\w_t\ge 0$ for all $t$.

Next, to show (2): $s_t = \w_{t:T}$. This is nearly immediate. By definition we have $\w_T=s_T$. Suppose $s_k = \w_{t:T}$ for some $t\ge 2$. Then $s_t= s_{t-1}-\w_{t-1}$ so that $s_{t-1}=s_t+\w_{t-1}=\w_{t-1:T}$. Thus by induction we have $s_t = \w_{t:T}$ for all $t$, which is the last thing we needed to show.
\end{proof}
This Theorem shows that by optimizing the weights $w_t$, we are in some sense also solving for the optimal learning rate $\eta_t$. However, notice that the reverse is not true: any given schedule $\eta_t$ can be represented by a number of different weights $\w_t$, and these weights give rise to different bounds using Theorem~\ref{thm:additive}. The proof of Theorem~\ref{thm:represent} works by constructing a particular set of weights $\w_t$, but these may not be the best weights in terms of providing the tightest convergence bound for the given $\eta_t$.

Although Theorem~\ref{thm:represent} shows that the learning rate representation of Theorem~\ref{thm:optimal} indeed covers all possible schedules, it does not provide a user-friendly way to analyze the convergence of an arbitrary schedule. In this section, we provide an alternative analysis that fills this gap.

This approach is closer to previous final-iterate analysis techniques: we bound the final iterate in terms of the average iterate, plus an additional \emph{error} term. This bound is strictly looser than those of Theorems~\ref{thm:additive} and \ref{thm:optimal} in the constant factors, but provides a convenient way to analyze arbitrary schedules. The bound is presented in Theorem~\ref{thm:anyetabound} below.
\begin{restatable}{theorem}{thmanyetabound}\label{thm:anyetabound}
Suppose that $f$ is convex and let $x_t$ be given by SGD with learning rates $\eta_t$: $x_{t+1}= x_t - \eta_t g_t$. Then for the last iterate $x_T$ we have:
\begin{align}
\E[f(x_{T})-f(\cmp)] & \leq\E\left[\frac{1}{2\sum_{t=1}^{T}\eta_{t}}D^{2}+\frac{1}{2\sum_{t=1}^{T}\eta_{t}}\sum_{t=1}^{T}\eta_{t}^{2}\left\Vert g_{t}\right\Vert ^{2}\right]\nonumber\\
 & +\E\left[\frac{1}{2}\sum_{k=1}^{T-1}\frac{\eta_{k}}{\sum_{t=k+1}^{T}\eta_{t}}\left(\frac{1}{\sum_{t=k}^{T}\eta_{t}}\sum_{t=k}^{T}\eta_{t}^{2}\left\Vert g_{t}\right\Vert ^{2}\right)\right]. \label{eq:key-bound}
\end{align}
\end{restatable}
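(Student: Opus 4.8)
The plan is to apply the all-tail summation identity (Lemma~\ref{lem:tail}) with the weight sequence chosen to equal the learning rates themselves, i.e. $\w_t=\eta_t$, and with $q_t = f(x_t)-f(\cmp)$. Since the schedule $\eta_t$ is deterministic, Lemma~\ref{lem:tail} applies pathwise along any realization of the iterates and gives the exact identity
\[
f(x_T)-f(\cmp)=\frac{1}{\sum_{t=1}^{T}\eta_{t}}\sum_{t=1}^{T}\eta_{t}q_{t}+\sum_{k=1}^{T-1}\frac{\eta_{k}}{\sum_{t=k+1}^{T}\eta_{t}}\left(\frac{1}{\sum_{t=k}^{T}\eta_{t}}\sum_{t=k}^{T}\eta_{t}\left(q_{t}-q_{k}\right)\right).
\]
I would then take expectations. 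All the coefficients $\frac{\eta_t}{\sum_s\eta_s}$ and $\frac{\eta_k}{(\sum_{t=k+1}^T\eta_t)(\sum_{t=k}^T\eta_t)}$ are non-negative, so it suffices to bound $\E[q_t]$ and $\E[q_t-q_k]$ term by term using convexity. For $\E[q_t]$, conditioning on $x_1,\dots,x_t$ and using $\E[g_t\mid x_1,\dots,x_t]\in\partial f(x_t)$ gives $\E[q_t]\le \E[\langle g_t,x_t-\cmp\rangle]$. For $\E[q_t-q_k]$ with $k\le t$, the same conditioning (noting that $x_k$ is measurable with respect to $x_1,\dots,x_t$) gives $\E[q_t-q_k]\le\E[\langle g_t,x_t-x_k\rangle]$.

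Next I would bound the two resulting inner sums with the standard one-step SGD expansion $\|x_{t+1}-y\|^2=\|x_t-y\|^2-2\eta_t\langle g_t,x_t-y\rangle+\eta_t^2\|g_t\|^2$. Taking $y=\cmp$ and telescoping over $t=1,\dots,T$ yields $\sum_{t=1}^T\eta_t\langle g_t,x_t-\cmp\rangle\le \tfrac{D^2}{2}+\tfrac12\sum_{t=1}^T\eta_t^2\|g_t\|^2$, after dropping the non-negative $\tfrac12\|x_{T+1}-\cmp\|^2$. Taking instead $y=x_k$ and telescoping over $t=k,\dots,T$ yields $\sum_{t=k}^T\eta_t\langle g_t,x_t-x_k\rangle\le\tfrac12\sum_{t=k}^T\eta_t^2\|g_t\|^2$, since the base term vanishes ($\|x_k-x_k\|^2=0$) and we again drop $\tfrac12\|x_{T+1}-x_k\|^2\ge0$. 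Substituting these into the expectation of the identity reproduces exactly the two lines of the claimed bound.

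The computation is essentially bookkeeping, and the one point requiring genuine care — the analogue of the main obstacle — is recognizing that the tail sums in Lemma~\ref{lem:tail} must be treated as regret-type quantities against the \emph{moving} comparator $x_k$, and that the SGD telescoping argument carries over verbatim with $x_k$ in place of $\cmp$ precisely because the base distance term $\|x_k-x_k\|^2$ is zero (this is what lets the $D^2$ term appear only once, in the leading average-iterate part). The only other subtlety is that the convexity/conditioning step using stochastic subgradients $g_t$ (rather than true subgradients) must be performed before, not after, replacing the inner sums by their squared-norm surrogates; doing it in this order keeps every manipulation valid in expectation.
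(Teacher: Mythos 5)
Your proposal is correct and follows essentially the same route as the paper: apply the all-tail identity of Lemma~\ref{lem:tail} with $\w_t=\eta_t$, bound the tail sums via the SGD telescoping argument with the moving comparator $x_k$ (where $\|x_k-x_k\|^2=0$ kills the distance term), and bound the weighted-average term via the standard telescoping against $\cmp$, which is exactly how the paper derives Corollary~\ref{cor:anyeta} and then Theorem~\ref{thm:anyetabound}. Your extra care in taking expectations before replacing stochastic subgradient inner products is a slight tightening of the paper's exposition but not a different argument.
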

Optimizing this bound with respect to the step-size sequence produces schedules that are visually indistinguishable from those of the regret based approach described in Section~\ref{sec:optimize} for large $T$.

To prove Theorem~\ref{thm:anyetabound}, we will need to control quantities like:
\[
\sum_{t=k}^{T}\eta_{t}\left(q_{t}-q_{k}\right)=f(x_{t})-f(x_{k}).
\]
This can be bounded by the usual suboptimality inequality for SGD/GD methods,
which holds for any $\cmp$:
\[
\sum_{t=k}^{T}\eta_{t}\left[f(x_{t})-f(\cmp)\right]\leq\frac{1}{2}\left\Vert x_{k}-\cmp\right\Vert ^{2}+\frac{1}{2}\sum_{t=k}^{T}\eta_{t}^{2}\left\Vert g_{t}\right\Vert ^{2},
\]
Setting $\cmp=x_{k}$ yields:
\[
\sum_{t=k}^{T}\eta_{t}\left[q_{t}-q_{k}\right]\leq\frac{1}{2}\sum_{t=k}^{T}\eta_{t}^{2}\left\Vert g_{t}\right\Vert ^{2}.
\]
We can use this in our Lemma~\ref{lem:tail} to obtain the following result:
\begin{corollary}\label{cor:anyeta}
\[
q_{T}=\frac{1}{\sum_{t=1}^{T}\eta_{t}}\sum_{t=1}^{T}\eta_{t}q_{t}+\frac{1}{2}\sum_{k=1}^{T-1}\frac{\eta_{k}}{\sum_{t=k+1}^{T}\eta_{t}}\left(\frac{1}{\sum_{t=k}^{T}\eta_{t}}\sum_{t=k}^{T}\eta_{t}^{2}\left\Vert g_{t}\right\Vert ^{2}\right).
\]
\end{corollary}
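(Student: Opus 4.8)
The plan is to obtain this identity as a direct specialization of the ``all-tail summation'' identity in Lemma~\ref{lem:tail}, followed by a single application of the SGD/GD suboptimality inequality to control each inner tail sum. Concretely, I would first instantiate Lemma~\ref{lem:tail} with the positive weighting sequence $\w_t = \eta_t$. Since Lemma~\ref{lem:tail} is an \emph{equality} that holds for any sequence $q_t$ and any positive weights, this immediately gives
\[
q_{T}=\frac{1}{\sum_{t=1}^{T}\eta_{t}}\sum_{t=1}^{T}\eta_{t}q_{t}+\sum_{k=1}^{T-1}\frac{\eta_{k}}{\sum_{t=k+1}^{T}\eta_{t}}\left(\frac{1}{\sum_{t=k}^{T}\eta_{t}}\sum_{t=k}^{T}\eta_{t}\left(q_{t}-q_{k}\right)\right),
\]
which already exhibits exactly the outer structure of the corollary; the only object remaining to rewrite is the inner weighted difference sum $\sum_{t=k}^{T}\eta_{t}(q_{t}-q_{k})$.

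Second, I would bound that inner sum using the displayed suboptimality inequality $\sum_{t=k}^{T}\eta_{t}[f(x_t)-f(\cmp)]\le \tfrac12\|x_k-\cmp\|^2 + \tfrac12\sum_{t=k}^T \eta_t^2\|g_t\|^2$, which holds for every comparator $\cmp$. The key move is to take $\cmp = x_k$: this annihilates the distance term since $\|x_k-x_k\|^2=0$, and turns the left side into $\sum_{t=k}^{T}\eta_{t}(q_t-q_k)$, yielding $\sum_{t=k}^{T}\eta_{t}(q_t-q_k)\le \tfrac12\sum_{t=k}^T \eta_t^2\|g_t\|^2$. This is precisely the form needed to match the inner bracket of the corollary, and it is where both the factor $\tfrac12$ and the $\|g_t\|^2$ terms enter.

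Third, I would substitute this bound into the identity from the first step. Because every outer multiplier $\frac{\eta_{k}}{\sum_{t=k+1}^{T}\eta_{t}}\cdot\frac{1}{\sum_{t=k}^{T}\eta_{t}}$ is non-negative (all $\eta_t>0$), replacing each inner tail sum by its upper bound preserves the direction of the relation, while the leading weighted-average term $\frac{1}{\sum_t \eta_t}\sum_t \eta_t q_t$ is carried over verbatim. This reproduces the right-hand side as written.

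The main obstacle is essentially bookkeeping rather than a genuine difficulty: one must check that the inner-sum form $\sum_{t=k}^{T}\eta_{t}(q_t-q_k)$ delivered by Lemma~\ref{lem:tail} matches the summation range and weighting of the suboptimality inequality, and that the comparator choice $\cmp=x_k$ is the one that simultaneously zeroes the distance term and recovers $q_t - q_k$. A secondary point worth flagging is that the corollary is stated with an equality sign, whereas the substitution step is an inequality; the equality is exact only for the decomposition supplied by Lemma~\ref{lem:tail}, and the passage from $\sum_{t=k}^T\eta_t(q_t-q_k)$ to $\tfrac12\sum_{t=k}^T\eta_t^2\|g_t\|^2$ is where the relation properly becomes a bound. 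I would therefore present the chain explicitly as identity-then-bound, so that the logical status of each ``$=$'' versus ``$\le$'' is unambiguous.
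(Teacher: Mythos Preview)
Your proposal is correct and matches the paper's approach exactly: the paper also instantiates Lemma~\ref{lem:tail} with $\w_t=\eta_t$ and then applies the SGD suboptimality inequality with comparator $\cmp=x_k$ to replace each inner tail sum $\sum_{t=k}^T\eta_t(q_t-q_k)$ by $\tfrac12\sum_{t=k}^T\eta_t^2\|g_t\|^2$. Your observation about the ``$=$'' versus ``$\le$'' is also apt---the paper's statement of the corollary writes an equality, but the derivation (and its subsequent use in Theorem~\ref{thm:anyetabound}) is indeed an inequality.
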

\subsection{Proof of Theorem~\ref{thm:anyetabound}}
\begin{proof}
Following the standard convergence bound approach:
\begin{align*}
\left\Vert x_{t+1}-\cmp\right\Vert ^{2} & =\left\Vert x_{t}-\eta_{t}g_{t}-\cmp\right\Vert ^{2}\\
 & =\left\Vert x_{t}-\cmp\right\Vert ^{2}-2\eta_{t}\left\langle g_{t},x_{t}-\cmp\right\rangle +\eta_{t}^{2}\left\Vert g_{t}\right\Vert ^{2}\\
 & \leq\left\Vert x_{t}-\cmp\right\Vert ^{2}-2\eta_{t}\left[f(x_{t})-f(\cmp)\right]+\eta_{t}^{2}\left\Vert g_{t}\right\Vert ^{2}.
\end{align*}
Summing over $t$ and telescoping gives:
\[
\sum_{t=1}^{T}\eta_{t}\left[f(x_{t})-f(\cmp)\right]\leq\frac{1}{2}D^{2}+\sum_{t=1}^{T}\eta_{t}^{2}\left\Vert g_{t}\right\Vert ^{2}.
\]
Then divide through by $\sum_{t=1}^{T}\eta_{t}$:
\[
\frac{1}{\sum_{t=1}^{T}\eta_{t}}\sum_{t=1}^{T}\eta_{t}\left[f(x_{t})-f_{*}\right]\leq\frac{1}{2\sum_{t=1}^{T}\eta_{t}}D^{2}+\frac{1}{2\sum_{t=1}^{T}\eta_{t}}\sum_{t=1}^{T}\eta_{t}^{2}\left\Vert g_{t}\right\Vert ^{2}.
\]
Now we apply Corollary~\ref{cor:anyeta} to get:
\begin{align*}
f(x_{T})-f_{*} & \leq\frac{1}{2\sum_{t=1}^{T}\eta_{t}}D^{2}+\frac{1}{2\sum_{t=1}^{T}\eta_{t}}\sum_{t=1}^{T}\eta_{t}^{2}\left\Vert g_{t}\right\Vert ^{2}\\
 & +\frac{1}{2}\sum_{k=1}^{T-1}\frac{\eta_{k}}{\sum_{t=k+1}^{T}\eta_{t}}\left(\frac{1}{\sum_{t=k}^{T}\eta_{t}}\sum_{t=k}^{T}\eta_{t}^{2}\left\Vert g_{t}\right\Vert ^{2}\right).
\end{align*}
\end{proof}

By using a worst-case bound of $\|g_t\|^2\le G^2$, we obtain:

\begin{corollary}
If $f$ is $G$-Lipschitz, then
\begin{align*}
f(x_{T})-f_{*} & \leq\frac{1}{2\sum_{t=1}^{T}\eta_{t}}D^{2}+\frac{G^{2}}{2\sum_{t=1}^{T}\eta_{t}}\sum_{t=1}^{T}\eta_{t}^{2}\\
 &\quad +\frac{G^{2}}{2}\sum_{k=1}^{T-1}\frac{\eta_{k}}{\sum_{t=k+1}^{T}\eta_{t}}\left(\frac{1}{\sum_{t=k}^{T}\eta_{t}}\sum_{t=k}^{T}\eta_{t}^{2}\right).
\end{align*}
\end{corollary}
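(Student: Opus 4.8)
The plan is to derive this bound directly from Theorem~\ref{thm:anyetabound} by a single monotonicity substitution. The one fact I need is that if $f$ is $G$-Lipschitz then every subgradient of $f$ has norm at most $G$, so $\|g_t\|^2 \le G^2$ for all $t$ (in the stochastic reading, this is exactly the statement that $G^2$ bounds $\E[\max_t\|g_t\|^2]$ in the sense of Section~\ref{sec:prelim}).

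Given this, I would take the right-hand side of the bound in Theorem~\ref{thm:anyetabound} and replace each occurrence of $\|g_t\|^2$ by $G^2$. The $D^2$ term carries over unchanged since it involves no gradient norm. In the term $\tfrac{1}{2\sum_t \eta_t}\sum_t \eta_t^2\|g_t\|^2$, each coefficient $\eta_t^2$ is non-negative, so this is at most $\tfrac{G^2}{2\sum_t \eta_t}\sum_t \eta_t^2$. In the double-sum error term, for each $k$ the inner quantity $\tfrac{1}{\sum_{t=k}^T \eta_t}\sum_{t=k}^T \eta_t^2\|g_t\|^2$ is at most $\tfrac{G^2}{\sum_{t=k}^T \eta_t}\sum_{t=k}^T \eta_t^2$, and since the outer weights $\eta_k/\sum_{t=k+1}^T \eta_t$ are non-negative, the entire double sum is bounded by pulling $G^2$ out front. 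Summing the three bounds, and noting that $f(\cmp)$ may be taken to be $f_\star$ by choosing $\cmp\in\argmin f$ (or passing to an infimum), yields exactly the claimed inequality; the expectation drops in the deterministic full-gradient case.

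There is really no obstacle here: the argument is a termwise replacement of $\|g_t\|^2$ by its upper bound $G^2$, and the only thing to be careful about is invoking the non-negativity of the coefficients $\eta_t^2$, $\eta_k$, and $1/\sum_{t=k+1}^T \eta_t$ so that the substitution is inequality-preserving term by term. If one insists on the stochastic statement, the only minor point is whether to assume a uniform almost-sure bound $\|g_t\|\le G$ or merely $\E[\max_t\|g_t\|^2]\le G^2$; either is enough because every gradient-norm term already sits inside the single outer expectation of Theorem~\ref{thm:anyetabound}.
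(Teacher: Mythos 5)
Your proposal is correct and is essentially the paper's own argument: the paper derives this corollary from Theorem~\ref{thm:anyetabound} by exactly the same termwise substitution $\|g_t\|^2\le G^2$ (stated in one line, "By using a worst-case bound of $\|g_t\|^2\le G^2$"), and your added remarks about non-negativity of the coefficients and the handling of the expectation are fine but not points of divergence.
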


\subsection{Linear schedule analysis with Theorem~\ref{thm:anyetabound}}
In this section, we use Theorem~\ref{thm:anyetabound} to re-analyze the the linear decay schedule:
\begin{equation}
\eta_{t}=\frac{D}{G\sqrt{T}}\left(1-\frac{t}{T+1}\right).\label{eq:lindec-app}
\end{equation}
The resulting convergence rate is asymptotically correct, but does not achieve the optimal constant obtained by Theorem~\ref{thm:optimal}.
\begin{theorem}
Schedule \ref{eq:lindec-app} gives the following bound on the last iterate:
\[
f(x_{T})-f_{*}\leq\left(2+\frac{1}{4}\right)\frac{DG}{\sqrt{T}},
\]
or more precisely:
\[
f(x_{T})-f_{*}\leq\left(2+\frac{H\left(T-1\right)-2/3}{T+1}\right)\frac{DG}{\sqrt{T}},
\]
where H(T) is the Tth harmonic sum.
\end{theorem}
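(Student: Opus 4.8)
The plan is to instantiate Theorem~\ref{thm:anyetabound} for Schedule~\ref{eq:lindec-app}, applying the worst-case bound $\|g_t\|^2\le G^2$ to all three terms (i.e.\ using the $G$-Lipschitz corollary of Theorem~\ref{thm:anyetabound}), and then evaluate each sum in closed form. I would first reparametrize: write $c=\tfrac{D}{G\sqrt T}$ and $b_t = 1-\tfrac{t}{T+1}=\tfrac{T+1-t}{T+1}$, so that $\eta_t = c\,b_t$ and every quantity factors through the single prefactor $c$. Using $\sum_{t=1}^T b_t=\tfrac T2$ (an arithmetic series after the index shift $j=T+1-t$), the first term $\tfrac{D^2}{2\sum_{t=1}^T\eta_t}$ collapses to exactly $\tfrac{DG}{\sqrt T}$. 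For the second term, the same shift gives $\sum_{t=1}^T b_t^2 = \tfrac{1}{(T+1)^2}\sum_{j=1}^T j^2 = \tfrac{T(2T+1)}{6(T+1)}$, so $\tfrac{G^2}{2\sum_{t=1}^T\eta_t}\sum_{t=1}^T\eta_t^2 = \tfrac{DG(2T+1)}{6\sqrt T(T+1)}$, which is $\tfrac13\cdot\tfrac{DG}{\sqrt T}$ up to lower order.

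The substance is the tail term $\tfrac{G^2}{2}\sum_{k=1}^{T-1}\tfrac{\eta_k}{\sum_{t=k+1}^T\eta_t}\bigl(\tfrac{1}{\sum_{t=k}^T\eta_t}\sum_{t=k}^T\eta_t^2\bigr)$. I would evaluate the three partial sums $\sum_{t=k}^T\eta_t$, $\sum_{t=k+1}^T\eta_t$, $\sum_{t=k}^T\eta_t^2$ by the same index shift, using $\sum_{j=1}^n j=\tfrac{n(n+1)}2$ and $\sum_{j=1}^n j^2=\tfrac{n(n+1)(2n+1)}6$. The crucial cancellation is
\[
\frac{\eta_k}{\sum_{t=k+1}^T\eta_t}=\frac{2(T+1-k)}{(T-k)(T-k+1)}=\frac{2}{T-k},
\]
since $T+1-k=T-k+1$, together with $\tfrac{1}{\sum_{t=k}^T\eta_t}\sum_{t=k}^T\eta_t^2 = c\cdot\tfrac{2T+3-2k}{3(T+1)}$. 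Substituting $m=T-k$ turns the $k$-th summand into $\tfrac{2c(2m+3)}{3(T+1)m}$; splitting $\tfrac{2m+3}{m}=2+\tfrac3m$ and summing $m=1,\dots,T-1$ makes the harmonic sum $H(T-1)$ appear alongside a linear-in-$T$ term, giving the tail contribution $\tfrac{DG}{\sqrt T}\cdot\tfrac{2(T-1)+3H(T-1)}{3(T+1)}$.

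Finally I would add the three contributions over the common denominator $6(T+1)$: the numerator is $12T+\Theta(1)+6H(T-1)=12(T+1)-\Theta(1)+6H(T-1)$, so the whole bracket equals $2+\tfrac{H(T-1)-c_0}{T+1}$ for an explicit constant $c_0$, which is the claimed precise bound. The cruder form $f(x_T)-f_*\le(2+\tfrac14)\tfrac{DG}{\sqrt T}$ then follows by checking that $\tfrac{H(T-1)-c_0}{T+1}\le\tfrac14$ for all $T$: directly for the first few values of $T$ and, for large $T$, via $H(n)\le 1+\ln n$, which makes $\tfrac{H(T-1)}{T+1}\to 0$. I expect the only real obstacle to be the bookkeeping in the partial tail sums — getting the index ranges in $\sum_{t=k}^{T}$ versus $\sum_{t=k+1}^{T}$ exactly right so that the ratio collapses to the clean factor $2/(T-k)$, which is precisely what causes the harmonic sum to emerge; everything else is routine arithmetic with the Faulhaber formulas.
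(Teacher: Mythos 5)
Your proposal is correct and follows the same route as the paper: both instantiate Theorem~\ref{thm:anyetabound} (in its $G$-Lipschitz corollary form) for schedule \ref{eq:lindec-app}, compute $\sum_{t=1}^T\eta_t$ and $\sum_{t=1}^T\eta_t^2$ exactly, and then handle the tail double sum. The only real difference is that the paper outsources that tail sum to SymPy, quoting $\sum_{k=1}^{T-1}\frac{\eta_k}{\sum_{t=k+1}^T\eta_t}\bigl(\frac{1}{\sum_{t=k}^T\eta_t}\sum_{t=k}^T\eta_t^2\bigr)=\frac{D}{G\sqrt{T}}\cdot\frac{4T+6H(T-1)-4}{3(T+1)}$, whereas you derive exactly this closed form by hand through the cancellation $\eta_k/\sum_{t=k+1}^T\eta_t=2/(T-k)$ and the substitution $m=T-k$ with $\frac{2m+3}{m}=2+\frac{3}{m}$ — a self-contained replacement for the computer-algebra step, and your formula matches the paper's. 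One caveat on the final assembly: carrying your exact arithmetic through (term two contributes $\frac{2T+1}{6(T+1)}$, the tail $\frac{4T+6H(T-1)-4}{6(T+1)}$) gives the bracket $2+\frac{H(T-1)-3/2}{T+1}$, so your explicit constant is $c_0=3/2$, not the paper's $2/3$; the paper's looser constant comes from bounding $\frac{2T+1}{6(T+1)}\le\frac{1}{3}$ and $\frac{T}{T+1}\le 1$ in the $4T$ part of the tail. Since $3/2>2/3$, your exact bound is slightly sharper and immediately implies the stated one, but you should state that implication explicitly rather than expect the constants to coincide; the crude $(2+\frac{1}{4})\frac{DG}{\sqrt{T}}$ bound then follows from $\frac{H(T-1)-3/2}{T+1}\le\frac{1}{4}$, verified exactly as you propose (small $T$ directly, large $T$ via $H(n)\le 1+\ln n$).
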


\begin{proof}
We start with the above bound:

\begin{align*}
f(x_{T})-f_{*} & \leq\frac{1}{2\sum_{t=1}^{T}\eta_{t}}D^{2}+\frac{G^{2}}{2\sum_{t=1}^{T}\eta_{t}}\sum_{t=1}^{T}\eta_{t}^{2}\\
 & +\frac{G^{2}}{2}\sum_{k=1}^{T-1}\frac{\eta_{k}}{\sum_{t=k+1}^{T}\eta_{t}}\left(\frac{1}{\sum_{t=k}^{T}\eta_{t}}\sum_{t=k}^{T}\eta_{t}^{2}\right).
\end{align*}
Since the $\frac{D}{G\sqrt{T}}$ part of the step size is constant,
it can be pulled outside the summations, so we just need to focus
on summations involving $\left(1-\frac{t}{T+1}\right)$. For the first
two terms we have in the bound, they simplify

\begin{align*}
\sum_{t=1}^{T}\eta_{t} & \propto\sum_{t=1}^{T}\left(1-\frac{t}{T+1}\right)\\
 & =\left(T-\frac{1}{T+1}\sum_{t=1}^{T}t\right)\\
 & =\left(T-\frac{1}{2}\frac{T(T+1)}{T+1}\right)\\
 & =\frac{T}{2}.
\end{align*}
Similarly,
\begin{align*}
\sum_{t=1}^{T}\eta_{t}^{2} & \propto\sum_{t=1}^{T}\left(1-\frac{t}{T+1}\right)^{2}\\
 & =\sum_{t=1}^{T}\left(1-\frac{2t}{T+1}+\frac{t^{2}}{T+1^{2}}\right)\\
 & =\left(T-T+\sum_{t=1}^{T}\frac{t^{2}}{T^{2}}\right)\\
 & =\left(\frac{T(T+1)(2T+1)}{6(T+1)^{2}}\right)\\
 & =\left(\frac{T(T+1)(T+1/2)}{3(T+1)^{2}}\right)\\
 & \leq\frac{T}{3}.
\end{align*}
So we have the following bound on the last iterate:

\begin{align*}
f(x_{T})-f_{*} & \leq\frac{DG}{\sqrt{T}}+\frac{DG}{3\sqrt{T}}\\
 & +\frac{G^{2}}{2}\sum_{k=1}^{T-1}\frac{\eta_{k}}{\sum_{t=k+1}^{T}\eta_{t}}\left(\frac{1}{\sum_{t=k}^{T}\eta_{t}}\sum_{t=k+1}^{T}\eta_{t}^{2}\right).
\end{align*}

To simplify the remaining term we rely on computer algebra software.
SymPy gives:
\[
\sum_{k=1}^{T-1}\frac{\eta_{k}}{\sum_{t=k+1}^{T}\eta_{t}}\left(\frac{1}{\sum_{t=k}^{T}\eta_{t}}\sum_{t=k+1}^{T}\eta_{t}^{2}\right)=\left(\frac{D}{G\sqrt{T}}\right)\frac{4T+6H\left(T-1\right)-4}{3(T+1)}.
\]

Where $H(T)=\sum_{t=1}^{T}1/t$ is the harmonic sum. So:
\begin{align*}
f(x_{T})-f_{*} & \leq\frac{DG}{\sqrt{T}}+\frac{DG}{3\sqrt{T}}+\frac{2DG\sqrt{T}}{3T}+\frac{DG\left(3H\left(T-1\right)-2\right)}{3\left(T+1\right)\sqrt{T}}\\
 & \leq\frac{2DG}{\sqrt{T}}+\frac{DG}{3\sqrt{T}}+\frac{DG\left(3H\left(T-1\right)-2\right)}{3(T+1)\sqrt{T}}.
\end{align*}
Note that the term $\left(3H\left(T-1\right)-2\right)/3\left(T+1\right)\leq1/4$
for all $T$, so:
\begin{align*}
\frac{DG\left(3H\left(T-1\right)-2\right)}{3(T+1)\sqrt{T}} & =\frac{DG}{4\sqrt{T}},
\end{align*}
So combining with the $\frac{DG}{\sqrt{T}}+\frac{DG}{3\sqrt{T}}$
terms, we have:

\[
f(x_{T})-f_{*}\leq\left(2+\frac{1}{4}\right)\frac{DG}{\sqrt{T}}.
\]
bounding the harmonic function with a log gives instead:
\[
f(x_{T})-f_{*}\leq\frac{2DG}{\sqrt{T}}+O\left(\frac{DG\log(T)}{T^{3/2}}\right).
\]
\end{proof}

\section{Experimental Setup}\label{sec:experiment_details}
Our experiments on CIFAR-10, CIFAR-100, ImageNet and RCNN use SGD, and the remaining problems use Adam. We used decoupled weight decay with Adam in each case following standard practice for each problem.

\subsection{Convex experiments}
Each dataset is obtained from the LIBSVM repository with used without modifications. 

\parbox{0.5\textwidth}{
\raggedleft
\begin{tabular}{|c|c|}
\hline
\textbf{Hyper-parameter}  & \textbf{Value}\tabularnewline
\hline 
GPUs  & 1$\times $V100\tabularnewline
\hline 
Batch size & 16\tabularnewline
\hline 
Epochs & 100\tabularnewline
\hline 
Seeds & 10\tabularnewline
\hline 
\end{tabular}
}\quad
\parbox{0.5\textwidth}{
\raggedright
\begin{tabular}{|c|c|}
\hline
\textbf{Hyper-parameter}  & \textbf{Value}\tabularnewline
\hline 
Decay & 0.0\tabularnewline
\hline 
Optimizer & Adam \tabularnewline
\hline 
$\beta_1$ & 0.9 \tabularnewline
\hline 
$\beta_2$ & 0.95 \tabularnewline
\hline 
\end{tabular}
}

\begin{figure}\label{fig:convex-schedules}
\center
\includegraphics[width=\textwidth]{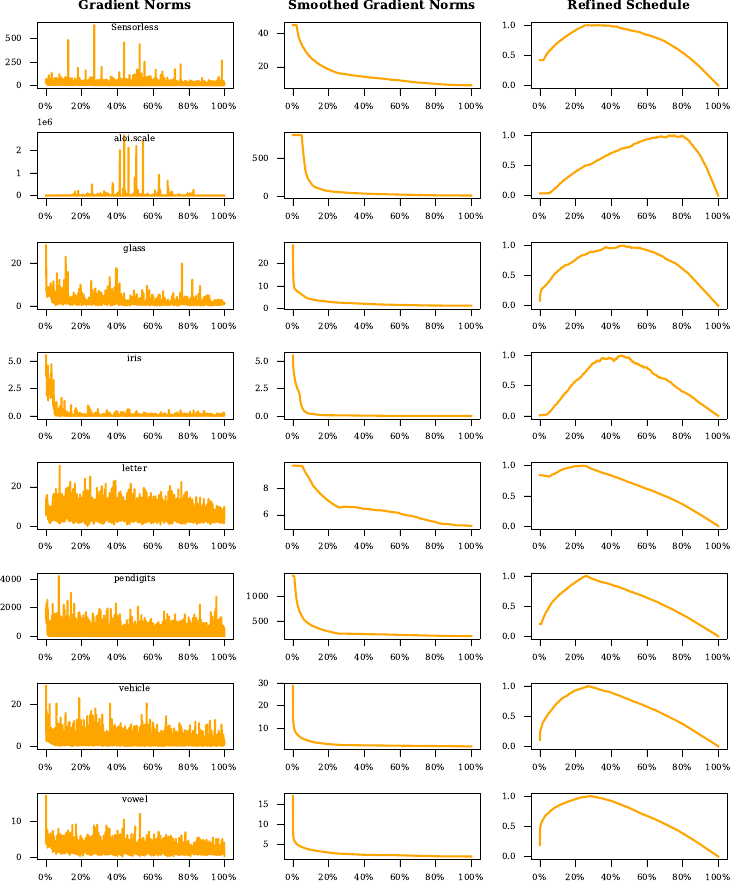}
\caption{\label{fig:convex-chedules}Logistic regression schedules, generated using a linear decay schedule with warmup for the initial run.}
\end{figure}

\subsection{CIFAR-10}
We used custom training code based on the PyTorch tutorial code for this problem. Following standard data-augmentation practises, we appliyed random horizontal flips and random offset cropping down to 32x32, using reflection padding of 4 pixels. Input pixel dataD was normalized by centering around 0.5.

\begin{tabular}[t]{|c|c|}
\hline 
\textbf{Hyper-parameter}  & \textbf{Value}\tabularnewline
\hline
Architecture  & Wide ResNet 16-8\tabularnewline
\hline 
Epochs  & 300\tabularnewline
\hline 
GPUs  & 1$\times $V100\tabularnewline
\hline 
Batch size per GPU  & 128\tabularnewline
\hline 
\end{tabular}
\quad
\begin{tabular}[t]{|c|c|}
\hline 
\textbf{Hyper-parameter}  & \textbf{Value}\tabularnewline
\hline 
Seeds & 10\tabularnewline
\hline 
decay & 0.0001\tabularnewline
\hline 
Momentum & 0.9\tabularnewline
\hline 
\end{tabular}

\subsection{CIFAR-100}
We used the same codebase as for our CIFAR-10 experiments, with the same data augmentation. 

We normalized each input image using fixed mean and standard error values derived from pre-processing the data. 

\begin{tabular}[t]{|c|c|}
\hline 
\textbf{Hyper-parameter}  & \textbf{Value}\tabularnewline
\hline
Architecture  & \begin{tabular}{@{}c@{}} DenseNet [6,12,24,16],\\ growth rate 12\end{tabular}\tabularnewline
\hline 
Epochs  & 300\tabularnewline
\hline 
GPUs  & 1$\times $V100\tabularnewline
\hline 
\end{tabular}
\quad
\begin{tabular}[t]{|c|c|}
\hline 
\textbf{Hyper-parameter}  & \textbf{Value}\tabularnewline
\hline 
Batch size per GPU  & 64\tabularnewline
\hline 
Seeds & 10\tabularnewline
\hline 
Decay & 0.0002\tabularnewline
\hline 
Momentum & 0.9\tabularnewline
\hline 
\end{tabular}

\subsection{ImageNet}
We used the same code-base as for our CIFAR-10 experiments, and applied the same preprocessing procedure. The data-augmentations consisted of PyTorch's RandomResizedCrop, cropping to 224x224 followed by random horizontal flips.
Test images used a fixed resize to 256x256 followed by a center crop to 224x224.

\begin{tabular}[t]{|c|c|}
\hline 
\textbf{Hyper-parameter}  & \textbf{Value}\tabularnewline
\hline
Architecture  & ResNet50\tabularnewline
\hline 
Epochs  & 100\tabularnewline
\hline 
GPUs  & 8$\times $V100\tabularnewline
\hline 
Batch size per GPU  & 32\tabularnewline
\hline 
\end{tabular}
\quad
\begin{tabular}[t]{|c|c|}
\hline
\textbf{Hyper-parameter}  & \textbf{Value}\tabularnewline
\hline  
Seeds & 5\tabularnewline
\hline 
Decay & 0.0001\tabularnewline
\hline 
Momentum & 0.9\tabularnewline
\hline 
\end{tabular}

\subsection{IWSLT14}
We used the FairSeq framework \footnote{\url{https://github.com/facebookresearch/fairseq}} for our experiments here, as well as for our GPT and RoBERTa experiments. Rather than a vanilla LSTM we use the variant from \citep{wiseman-rush-2016-sequence} provided in the FairSeq codebase.

\begin{tabular}[t]{|c|c|}
\hline 
\textbf{Hyper-parameter} & \textbf{Value}\tabularnewline
\hline
Architecture  & lstm\_wiseman\_iwslt\_de\_en\tabularnewline
\hline 
Max Epoch  & 55\tabularnewline
\hline 
GPUs  & 1$\times $V100\tabularnewline
\hline 
Tokens per batch  & 4096\tabularnewline
\hline 
Warmup steps  & 4000\tabularnewline
\hline 
Dropout  & 0.3\tabularnewline
\hline 
Label smoothing  & 0.1\tabularnewline
\hline 
\end{tabular}
\quad
\begin{tabular}[t]{|c|c|}
\hline 
\textbf{Hyper-parameter} & \textbf{Value}\tabularnewline
\hline
\begin{tabular}{@{}c@{}}Share decoder, input, \\ output embed\end{tabular}  & True\tabularnewline
\hline 
Float16  & True\tabularnewline
\hline 
Update Frequency  & 1\tabularnewline
\hline 
Seeds & 10\tabularnewline
\hline 
Decay & 0.05\tabularnewline
\hline
$\beta_1, \beta_2$ & 0.9, 0.98 \tabularnewline
\hline 
\end{tabular}

\subsection{RoBERTa}
The RoBERTa implementation in FairSeq is the canonical one. We differ from the paper's results by training for a shorter duration, which is necessary to keep our experiments computationally tractable. Our BookWiki dataset matches the original paper.

\begin{tabular}[t]{|c|c|}
\hline 
\textbf{Hyper-parameter}  & \textbf{Value}\tabularnewline
\hline
Architecture  & roberta\_base\tabularnewline
\hline 
Task  & masked\_lm\tabularnewline
\hline 
Max updates  & 23,000\tabularnewline
\hline 
GPUs  & 8$\times $V100\tabularnewline
\hline 
Max tokens per sample  & 512\tabularnewline
\hline 
Dropout & 0.1\tabularnewline
\hline 
Attention Dropout & 0.1\tabularnewline
\hline 
Max sentences & 16\tabularnewline
\hline 
\end{tabular}
\quad
\begin{tabular}[t]{|c|c|}
\hline 
\textbf{Hyper-parameter}  & \textbf{Value}\tabularnewline
\hline
Warmup  & 10,000\tabularnewline
\hline 
Sample Break Mode  & Complete\tabularnewline
\hline 
Float16  & True\tabularnewline
\hline 
Update Frequency  & 16\tabularnewline
\hline 
Seeds & 5\tabularnewline
\hline 
Decay & 0.0\tabularnewline
\hline
$\beta_1, \beta_2$ & 0.9, 0.98 \tabularnewline
\hline 
\end{tabular}

\subsection{GPT}
Since the training dataset for GPT models are not availiable, we use the BookWiki dataset as used for RoBERTa training. Our model here is small, using 12 decoding layers and a decoder embedding dim of 768, giving 162 million parameters.

\begin{tabular}[t]{|c|c|}
\hline 
\textbf{Hyper-parameter}  & \textbf{Value}\tabularnewline
\hline
Architecture  & transformer\_lm\_gpt\tabularnewline
\hline 
Task  &  language\_modeling\tabularnewline
\hline 
Max updates  & 65,000\tabularnewline
\hline 
GPUs  & 8$\times $V100\tabularnewline
\hline 
Tokens per sample  & 512\tabularnewline
\hline 
Dropout & 0.1\tabularnewline
\hline 
Attention Dropout & 0.1\tabularnewline
\hline 
Max sentences & 1\tabularnewline
\hline 
Warmup  & 10,000\tabularnewline
\hline 
\end{tabular}
\quad
\begin{tabular}[t]{|c|c|}
\hline 
\textbf{Hyper-parameter}  & \textbf{Value}\tabularnewline
\hline
Sample Break Mode  & Complete\tabularnewline
\hline 
\begin{tabular}{@{}c@{}}Share decoder, input, \\ output embed\end{tabular}  & True\tabularnewline
\hline 
Float16  & True\tabularnewline
\hline 
Update Frequency  & 16\tabularnewline
\hline 
Seeds & 5\tabularnewline
\hline 
Decay & 0.005\tabularnewline
\hline
$\beta_1, \beta_2$ & 0.9, 0.98 \tabularnewline
\hline 
\end{tabular}

\subsection{ViT}
Our implementation uses the PyTorch Image Models library \footnote{\url{https://github.com/rwightman/pytorch-image-models}}, with hyper-parameters following examples given in the repository. 

\begin{tabular}[t]{|c|c|}
\hline 
\textbf{Hyper-parameter}  & \textbf{Value}\tabularnewline
\hline
Model  & vit\_tiny\_patch16\_224\tabularnewline
\hline 
GPUs  & 8$\times $V100\tabularnewline
\hline 
Epochs & 300\tabularnewline
\hline
Batch Size & 512\tabularnewline
\hline 
Warmup epochs & 5\tabularnewline
\hline 
Hflip & 0.5\tabularnewline
\hline 
aa & rand-m6-mstd0.5\tabularnewline
\hline 
mixup & 0.1\tabularnewline
\hline 
\end{tabular}
\quad
\begin{tabular}[t]{|c|c|}
\hline 
\textbf{Hyper-parameter}  & \textbf{Value}\tabularnewline
\hline
mixup & 0.1\tabularnewline
\hline 
cutmix & 1.0\tabularnewline
\hline 
Crop Pct & 0.9\tabularnewline
\hline 
BCE Loss & True\tabularnewline
\hline 
Seeds & 5\tabularnewline
\hline 
Decay & 0.1\tabularnewline
\hline
$\beta_1, \beta_2$ & 0.9, 0.999 \tabularnewline
\hline 
\end{tabular}

\subsection{DLRM}
We used a custom implementation of the DLRM model based on the publicly available code. Our optimizer  uses dense gradients for implementation simplicity, although sparse-gradients using AdaGrad is a more common baseline on this problem, we consider AdaGrad variants of our scheduling approach as future work.

\begin{tabular}[t]{|c|c|}
\hline 
\textbf{Hyper-parameter} & \textbf{Value}\tabularnewline
\hline
Iterations & 300 000\tabularnewline
\hline
Batch Size & 128\tabularnewline
\hline 
Emb Dimension & 16\tabularnewline
\hline 
GPUs  & 8$\times $V100\tabularnewline
\hline 
\end{tabular}
\quad
\begin{tabular}[t]{|c|c|}
\hline 
\textbf{Hyper-parameter} & \textbf{Value}\tabularnewline
\hline
Seeds & 5\tabularnewline
\hline 
Decay & 0.0\tabularnewline
\hline
$\beta_1, \beta_2$ & 0.9, 0.999 \tabularnewline
\hline 
\end{tabular}

\subsection{MRI}
We used the version of the the fastMRI code base at \url{https://github.com/facebookresearch/fastMRI/tree/main/banding_removal}. Note that we found that training failed using PyTorch 2 or newer, and so we ran these experiments using PyTorch 1.9.

\begin{tabular}[t]{|c|c|}
\hline 
\textbf{Hyper-parameter}  & \textbf{Value}\tabularnewline
\hline
Architecture  & 12 layer VarNet 2.0\tabularnewline
\hline 
Epochs  & 50\tabularnewline
\hline 
GPUs  & 8$\times $V100\tabularnewline
\hline 
Batch size per GPU  & 1\tabularnewline
\hline 
Acceleration factor  & 4\tabularnewline
\hline 
\end{tabular}
\quad
\begin{tabular}[t]{|c|c|}
\hline 
\textbf{Hyper-parameter} & \textbf{Value}\tabularnewline
\hline 
Low frequency lines  & 16\tabularnewline
\hline 
Mask type  & Offset-1\tabularnewline
\hline 
Seeds & 5\tabularnewline
\hline 
Decay & 0.0\tabularnewline
\hline 
$\beta_1, \beta_2$ & 0.9, 0.999 \tabularnewline
\hline 
\end{tabular}

\subsection{RCNN}
Our RCNN experiments use Detectron2\footnote{\url{https://github.com/facebookresearch/detectron2}}, and we use a pretrained ResNet backbone\footnote{ \texttt{detectron2://ImageNetPretrained/MSRA/R-50.pkl}}.

\begin{tabular}[t]{|c|c|}
\hline  
\textbf{Hyper-parameter}  & \textbf{Value}\tabularnewline
\hline
Backbone  & ResNet-50 \tabularnewline
\hline 
Max Iter & 200000 \tabularnewline
\hline
IMS Per Batch & 16 \tabularnewline
\hline
\end{tabular}
\quad
\begin{tabular}[t]{|c|c|}
\hline 
\textbf{Hyper-parameter}  & \textbf{Value}\tabularnewline
\hline 
GPUs  & 8$\times $V100\tabularnewline
\hline
Momentum & 0.9 \tabularnewline
\hline 
Decay & 1.5e-4 \tabularnewline
\hline 
\end{tabular}

\section{Popularity of standard learning rate schedules}
\begin{tabular}{|c|c|}
\hline 
\textbf{PyTorch Scheduler} & \textbf{Github Files (in thousands)}\tabularnewline
\hline 
\hline 
ReduceLROnPlateau & 105\tabularnewline
\hline 
StepLR & 101\tabularnewline
\hline 
MultiStepLR & 37.9\tabularnewline
\hline 
CosineAnnealingLR & 37.1\tabularnewline
\hline 
ExponentialLR & 16\tabularnewline
\hline 
OneCycleLR & 14.9\tabularnewline
\hline 
CosineAnnealingWarmRestarts & 10.9\tabularnewline
\hline 
CyclicLR & 9.1\tabularnewline
\hline 
LinearLR & 5.9\tabularnewline
\hline 
ConstantLR & 3.6\tabularnewline
\hline 
MultiplicativeLR & 2.6\tabularnewline
\hline 
PolynomialLR & 1.3\tabularnewline
\hline 
\end{tabular}
\label{table:sched-popularity}


\newpage
\section*{NeurIPS Paper Checklist}

\begin{enumerate}

\item {\bf Claims}
    \item[] Question: Do the main claims made in the abstract and introduction accurately reflect the paper's contributions and scope?
    \item[] Answer: \answerYes{}
    \item[] Justification: Our abstract makes clear theoretical and practical claims which are precisely addressed in the body of the paper.
    \item[] Guidelines:
    \begin{itemize}
        \item The answer NA means that the abstract and introduction do not include the claims made in the paper.
        \item The abstract and/or introduction should clearly state the claims made, including the contributions made in the paper and important assumptions and limitations. A No or NA answer to this question will not be perceived well by the reviewers. 
        \item The claims made should match theoretical and experimental results, and reflect how much the results can be expected to generalize to other settings. 
        \item It is fine to include aspirational goals as motivation as long as it is clear that these goals are not attained by the paper. 
    \end{itemize}

\item {\bf Limitations}
    \item[] Question: Does the paper discuss the limitations of the work performed by the authors?
    \item[] Answer: \answerYes{}
    \item[] Justification: We specifically have a limitations section (Section~\ref{sec:limitations}) which addresses the main limitations of our work.
    \item[] Guidelines:
    \begin{itemize}
        \item The answer NA means that the paper has no limitation while the answer No means that the paper has limitations, but those are not discussed in the paper. 
        \item The authors are encouraged to create a separate "Limitations" section in their paper.
        \item The paper should point out any strong assumptions and how robust the results are to violations of these assumptions (e.g., independence assumptions, noiseless settings, model well-specification, asymptotic approximations only holding locally). The authors should reflect on how these assumptions might be violated in practice and what the implications would be.
        \item The authors should reflect on the scope of the claims made, e.g., if the approach was only tested on a few datasets or with a few runs. In general, empirical results often depend on implicit assumptions, which should be articulated.
        \item The authors should reflect on the factors that influence the performance of the approach. For example, a facial recognition algorithm may perform poorly when image resolution is low or images are taken in low lighting. Or a speech-to-text system might not be used reliably to provide closed captions for online lectures because it fails to handle technical jargon.
        \item The authors should discuss the computational efficiency of the proposed algorithms and how they scale with dataset size.
        \item If applicable, the authors should discuss possible limitations of their approach to address problems of privacy and fairness.
        \item While the authors might fear that complete honesty about limitations might be used by reviewers as grounds for rejection, a worse outcome might be that reviewers discover limitations that aren't acknowledged in the paper. The authors should use their best judgment and recognize that individual actions in favor of transparency play an important role in developing norms that preserve the integrity of the community. Reviewers will be specifically instructed to not penalize honesty concerning limitations.
    \end{itemize}

\item {\bf Theory Assumptions and Proofs}
    \item[] Question: For each theoretical result, does the paper provide the full set of assumptions and a complete (and correct) proof?
    \item[] Answer: \answerYes{}
    \item[] Justification: Proofs of all theoretical results are included in the Appendix.
    \item[] Guidelines:
    \begin{itemize}
        \item The answer NA means that the paper does not include theoretical results. 
        \item All the theorems, formulas, and proofs in the paper should be numbered and cross-referenced.
        \item All assumptions should be clearly stated or referenced in the statement of any theorems.
        \item The proofs can either appear in the main paper or the supplemental material, but if they appear in the supplemental material, the authors are encouraged to provide a short proof sketch to provide intuition. 
        \item Inversely, any informal proof provided in the core of the paper should be complemented by formal proofs provided in appendix or supplemental material.
        \item Theorems and Lemmas that the proof relies upon should be properly referenced. 
    \end{itemize}

    \item {\bf Experimental Result Reproducibility}
    \item[] Question: Does the paper fully disclose all the information needed to reproduce the main experimental results of the paper to the extent that it affects the main claims and/or conclusions of the paper (regardless of whether the code and data are provided or not)?
    \item[] Answer: \answerYes{}
    \item[] Justification: We include extensive details about all experiments conducted in the paper. Section~\ref{sec:experiment_details} includes hyper-parameter tables that allow for precise replication of all results.
    \item[] Guidelines:
    \begin{itemize}
        \item The answer NA means that the paper does not include experiments.
        \item If the paper includes experiments, a No answer to this question will not be perceived well by the reviewers: Making the paper reproducible is important, regardless of whether the code and data are provided or not.
        \item If the contribution is a dataset and/or model, the authors should describe the steps taken to make their results reproducible or verifiable. 
        \item Depending on the contribution, reproducibility can be accomplished in various ways. For example, if the contribution is a novel architecture, describing the architecture fully might suffice, or if the contribution is a specific model and empirical evaluation, it may be necessary to either make it possible for others to replicate the model with the same dataset, or provide access to the model. In general. releasing code and data is often one good way to accomplish this, but reproducibility can also be provided via detailed instructions for how to replicate the results, access to a hosted model (e.g., in the case of a large language model), releasing of a model checkpoint, or other means that are appropriate to the research performed.
        \item While NeurIPS does not require releasing code, the conference does require all submissions to provide some reasonable avenue for reproducibility, which may depend on the nature of the contribution. For example
        \begin{enumerate}
            \item If the contribution is primarily a new algorithm, the paper should make it clear how to reproduce that algorithm.
            \item If the contribution is primarily a new model architecture, the paper should describe the architecture clearly and fully.
            \item If the contribution is a new model (e.g., a large language model), then there should either be a way to access this model for reproducing the results or a way to reproduce the model (e.g., with an open-source dataset or instructions for how to construct the dataset).
            \item We recognize that reproducibility may be tricky in some cases, in which case authors are welcome to describe the particular way they provide for reproducibility. In the case of closed-source models, it may be that access to the model is limited in some way (e.g., to registered users), but it should be possible for other researchers to have some path to reproducing or verifying the results.
        \end{enumerate}
    \end{itemize}

\item {\bf Open access to data and code}
    \item[] Question: Does the paper provide open access to the data and code, with sufficient instructions to faithfully reproduce the main experimental results, as described in supplemental material?
    \item[] Answer: \answerYes{}
    \item[] Justification: We have already made source code available online for our refinement procedure. Our experiments rely on existing official open source code bases, with small modifications to use other learning rate schedules.
    \item[] Guidelines:
    \begin{itemize}
        \item The answer NA means that paper does not include experiments requiring code.
        \item Please see the NeurIPS code and data submission guidelines (\url{https://nips.cc/public/guides/CodeSubmissionPolicy}) for more details.
        \item While we encourage the release of code and data, we understand that this might not be possible, so “No” is an acceptable answer. Papers cannot be rejected simply for not including code, unless this is central to the contribution (e.g., for a new open-source benchmark).
        \item The instructions should contain the exact command and environment needed to run to reproduce the results. See the NeurIPS code and data submission guidelines (\url{https://nips.cc/public/guides/CodeSubmissionPolicy}) for more details.
        \item The authors should provide instructions on data access and preparation, including how to access the raw data, preprocessed data, intermediate data, and generated data, etc.
        \item The authors should provide scripts to reproduce all experimental results for the new proposed method and baselines. If only a subset of experiments are reproducible, they should state which ones are omitted from the script and why.
        \item At submission time, to preserve anonymity, the authors should release anonymized versions (if applicable).
        \item Providing as much information as possible in supplemental material (appended to the paper) is recommended, but including URLs to data and code is permitted.
    \end{itemize}

\item {\bf Experimental Setting/Details}
    \item[] Question: Does the paper specify all the training and test details (e.g., data splits, hyperparameters, how they were chosen, type of optimizer, etc.) necessary to understand the results?
    \item[] Answer:  \answerYes{}
    \item[] Justification: We did our best to specify all relevant experimental parameters in Section~\ref{sec:experiment_details}. In most cases, the hyper-parameters used closely match the original paper describing the method being tested 
    \item[] Guidelines:
    \begin{itemize}
        \item The answer NA means that the paper does not include experiments.
        \item The experimental setting should be presented in the core of the paper to a level of detail that is necessary to appreciate the results and make sense of them.
        \item The full details can be provided either with the code, in appendix, or as supplemental material.
    \end{itemize}

\item {\bf Experiment Statistical Significance}
    \item[] Question: Does the paper report error bars suitably and correctly defined or other appropriate information about the statistical significance of the experiments?
    \item[] Answer: \answerYes{}
    \item[] Justification: We include error bars showing 2 standard-error of the mean (SEM) levels all our experiments, with the exception of the large-language-model pretraining experiments where only a single training run is feasible.
    \item[] Guidelines:
    \begin{itemize}
        \item The answer NA means that the paper does not include experiments.
        \item The authors should answer "Yes" if the results are accompanied by error bars, confidence intervals, or statistical significance tests, at least for the experiments that support the main claims of the paper.
        \item The factors of variability that the error bars are capturing should be clearly stated (for example, train/test split, initialization, random drawing of some parameter, or overall run with given experimental conditions).
        \item The method for calculating the error bars should be explained (closed form formula, call to a library function, bootstrap, etc.)
        \item The assumptions made should be given (e.g., Normally distributed errors).
        \item It should be clear whether the error bar is the standard deviation or the standard error of the mean.
        \item It is OK to report 1-sigma error bars, but one should state it. The authors should preferably report a 2-sigma error bar than state that they have a 96\% CI, if the hypothesis of Normality of errors is not verified.
        \item For asymmetric distributions, the authors should be careful not to show in tables or figures symmetric error bars that would yield results that are out of range (e.g. negative error rates).
        \item If error bars are reported in tables or plots, The authors should explain in the text how they were calculated and reference the corresponding figures or tables in the text.
    \end{itemize}

\item {\bf Experiments Compute Resources}
    \item[] Question: For each experiment, does the paper provide sufficient information on the computer resources (type of compute workers, memory, time of execution) needed to reproduce the experiments?
    \item[] Answer: \answerNo{}
    \item[] Justification: In all of our experiments, we are running exiting code-bases from previous papers, in which the compute costs of the methods are already described. Our approach adds no additional measurable overhead.
    \item[] Guidelines:
    \begin{itemize}
        \item The answer NA means that the paper does not include experiments.
        \item The paper should indicate the type of compute workers CPU or GPU, internal cluster, or cloud provider, including relevant memory and storage.
        \item The paper should provide the amount of compute required for each of the individual experimental runs as well as estimate the total compute. 
        \item The paper should disclose whether the full research project required more compute than the experiments reported in the paper (e.g., preliminary or failed experiments that didn't make it into the paper). 
    \end{itemize}
    
\item {\bf Code Of Ethics}
    \item[] Question: Does the research conducted in the paper conform, in every respect, with the NeurIPS Code of Ethics \url{https://neurips.cc/public/EthicsGuidelines}?
    \item[] Answer: \answerYes{}
    \item[] Justification: Optimization research doesn't touch upon any of the major concerns raised in the code of ethics.
    \item[] Guidelines:
    \begin{itemize}
        \item The answer NA means that the authors have not reviewed the NeurIPS Code of Ethics.
        \item If the authors answer No, they should explain the special circumstances that require a deviation from the Code of Ethics.
        \item The authors should make sure to preserve anonymity (e.g., if there is a special consideration due to laws or regulations in their jurisdiction).
    \end{itemize}

\item {\bf Broader Impacts}
    \item[] Question: Does the paper discuss both potential positive societal impacts and negative societal impacts of the work performed?
    \item[] Answer: \answerNA{}
    \item[] Justification: There are no direct societal consequences from our work, beyond general concerns that apply to all work in the field.
    \item[] Guidelines:
    \begin{itemize}
        \item The answer NA means that there is no societal impact of the work performed.
        \item If the authors answer NA or No, they should explain why their work has no societal impact or why the paper does not address societal impact.
        \item Examples of negative societal impacts include potential malicious or unintended uses (e.g., disinformation, generating fake profiles, surveillance), fairness considerations (e.g., deployment of technologies that could make decisions that unfairly impact specific groups), privacy considerations, and security considerations.
        \item The conference expects that many papers will be foundational research and not tied to particular applications, let alone deployments. However, if there is a direct path to any negative applications, the authors should point it out. For example, it is legitimate to point out that an improvement in the quality of generative models could be used to generate deepfakes for disinformation. On the other hand, it is not needed to point out that a generic algorithm for optimizing neural networks could enable people to train models that generate Deepfakes faster.
        \item The authors should consider possible harms that could arise when the technology is being used as intended and functioning correctly, harms that could arise when the technology is being used as intended but gives incorrect results, and harms following from (intentional or unintentional) misuse of the technology.
        \item If there are negative societal impacts, the authors could also discuss possible mitigation strategies (e.g., gated release of models, providing defenses in addition to attacks, mechanisms for monitoring misuse, mechanisms to monitor how a system learns from feedback over time, improving the efficiency and accessibility of ML).
    \end{itemize}
    
\item {\bf Safeguards}
    \item[] Question: Does the paper describe safeguards that have been put in place for responsible release of data or models that have a high risk for misuse (e.g., pretrained language models, image generators, or scraped datasets)?
    \item[] Answer: \answerNA{}.
    \item[] Justification: We are not releasing any data or models.
    \item[] Guidelines:
    \begin{itemize}
        \item The answer NA means that the paper poses no such risks.
        \item Released models that have a high risk for misuse or dual-use should be released with necessary safeguards to allow for controlled use of the model, for example by requiring that users adhere to usage guidelines or restrictions to access the model or implementing safety filters. 
        \item Datasets that have been scraped from the Internet could pose safety risks. The authors should describe how they avoided releasing unsafe images.
        \item We recognize that providing effective safeguards is challenging, and many papers do not require this, but we encourage authors to take this into account and make a best faith effort.
    \end{itemize}

\item {\bf Licenses for existing assets}
    \item[] Question: Are the creators or original owners of assets (e.g., code, data, models), used in the paper, properly credited and are the license and terms of use explicitly mentioned and properly respected?
    \item[] Answer: \answerNo{}
    \item[] Justification: All figures included in the paper are our own. Since we do not release any data or models, we do not discuss licensing terms of the datasets used in this work. We provide references in all cases to further information, including licensing, for each dataset.
    \item[] Guidelines:
    \begin{itemize}
        \item The answer NA means that the paper does not use existing assets.
        \item The authors should cite the original paper that produced the code package or dataset.
        \item The authors should state which version of the asset is used and, if possible, include a URL.
        \item The name of the license (e.g., CC-BY 4.0) should be included for each asset.
        \item For scraped data from a particular source (e.g., website), the copyright and terms of service of that source should be provided.
        \item If assets are released, the license, copyright information, and terms of use in the package should be provided. For popular datasets, \url{paperswithcode.com/datasets} has curated licenses for some datasets. Their licensing guide can help determine the license of a dataset.
        \item For existing datasets that are re-packaged, both the original license and the license of the derived asset (if it has changed) should be provided.
        \item If this information is not available online, the authors are encouraged to reach out to the asset's creators.
    \end{itemize}

\item {\bf New Assets}
    \item[] Question: Are new assets introduced in the paper well documented and is the documentation provided alongside the assets?
    \item[] Answer: \answerNA{}.
    \item[] Justification: There are no new assets.
    \item[] Guidelines:
    \begin{itemize}
        \item The answer NA means that the paper does not release new assets.
        \item Researchers should communicate the details of the dataset/code/model as part of their submissions via structured templates. This includes details about training, license, limitations, etc. 
        \item The paper should discuss whether and how consent was obtained from people whose asset is used.
        \item At submission time, remember to anonymize your assets (if applicable). You can either create an anonymized URL or include an anonymized zip file.
    \end{itemize}

\item {\bf Crowdsourcing and Research with Human Subjects}
    \item[] Question: For crowdsourcing experiments and research with human subjects, does the paper include the full text of instructions given to participants and screenshots, if applicable, as well as details about compensation (if any)? 
    \item[] Answer: \answerNA{}.
    \item[] Justification: Not applicable.
    \item[] Guidelines:
    \begin{itemize}
        \item The answer NA means that the paper does not involve crowdsourcing nor research with human subjects.
        \item Including this information in the supplemental material is fine, but if the main contribution of the paper involves human subjects, then as much detail as possible should be included in the main paper. 
        \item According to the NeurIPS Code of Ethics, workers involved in data collection, curation, or other labor should be paid at least the minimum wage in the country of the data collector. 
    \end{itemize}

\item {\bf Institutional Review Board (IRB) Approvals or Equivalent for Research with Human Subjects}
    \item[] Question: Does the paper describe potential risks incurred by study participants, whether such risks were disclosed to the subjects, and whether Institutional Review Board (IRB) approvals (or an equivalent approval/review based on the requirements of your country or institution) were obtained?
    \item[] Answer: \answerNA{}
    \item[] Justification: Not applicable.
    \item[] Guidelines:
    \begin{itemize}
        \item The answer NA means that the paper does not involve crowdsourcing nor research with human subjects.
        \item Depending on the country in which research is conducted, IRB approval (or equivalent) may be required for any human subjects research. If you obtained IRB approval, you should clearly state this in the paper. 
        \item We recognize that the procedures for this may vary significantly between institutions and locations, and we expect authors to adhere to the NeurIPS Code of Ethics and the guidelines for their institution. 
        \item For initial submissions, do not include any information that would break anonymity (if applicable), such as the institution conducting the review.
    \end{itemize}

\end{enumerate}

\end{document}